\newtheorem{problem*}{Problem}
\newtheorem{theorem}{Theorem}
\newtheorem{definition}{Definition}
\newtheorem{proposition}{Proposition}
 \newcommand{\cY}{\mathcal{Y}}
 \newcommand{\cX}{\mathcal{X}}
\DeclareMathOperator*{\argmax}{argmax}
\theoremstyle{plain}
\icmltitlerunning{Personalized Privacy Auditing and Optimization at Test Time \hfill\thepage}
\begin{document}

\twocolumn[
\icmltitle{Personalized Privacy Auditing and Optimization at Test Time}



\icmlsetsymbol{equal}{*}

\begin{icmlauthorlist}
\icmlauthor{Cuong Tran}{yyy}
\icmlauthor{Ferdinando Fioretto}{yyy}
\end{icmlauthorlist}
\icmlaffiliation{yyy}{Department of Computer Science, Syracuse University, Syracuse NY, USA}
\icmlcorrespondingauthor{Cuong Tran}{ctran@syr.edu}
\icmlcorrespondingauthor{Ferdinando Fioretto}{ffiorett@syr.edu}

\icmlkeywords{Privacy, Machine Learning}

\vskip 0.3in
]



\printAffiliationsAndNotice{}  

\begin{abstract}
A number of learning models used in consequential domains, such as to assist in legal, banking, hiring, and healthcare decisions, make use of potentially sensitive users' information to carry out inference. Further, the complete set of features is typically required to perform inference. This not only poses severe privacy risks for the individuals using the learning systems, but also requires companies and organizations massive human efforts to verify the correctness of the released information. 

This paper asks whether it is necessary to require \emph{all} input features for a model to return accurate predictions at test time and shows that, under a personalized setting, each individual may need to release only a small subset of these features without impacting the final decisions.
The paper also provides an efficient sequential algorithm that chooses which attributes should be provided by each individual. Evaluation over several learning tasks shows that individuals may be able to report as little as 10\% of their information to ensure the same level of accuracy of a model that uses the complete users' information.
\end{abstract}

\section{Introduction}
\label{sec:intro}

The remarkable success of learning models also brought with it pressing challenges at the interface of privacy and decision-making. Privacy, in particular, has been cited as one of the most pressing challenges of modern machine learning systems \cite{papernot2016towards}. 
The requirement to protect personally identifiable information is especially important as machine learning pipelines become routinely adopted to guide consequential decisions, such as to assist in legal processes, banking, hiring, and healthcare decisions. 


To contrast this challenge, several privacy-enhancing technologies have been proposed in the last decades. 
Among these \emph{Differential Privacy} \cite{Dwork:06} has found its place as a strong and rigorous privacy notion, largely considered as the de-facto standard mechanism to protect sensitive users data in statistical data analysis with notable adoption by the US.~Census Bureau \cite{abowd2018us}, Google \cite{erlingsson2014rappor} and Apple \cite{cormode2018privacy}. 

While this framework has desirable properties its development has been focused on protecting the information contained in the training data, leaving thus possible exposure to the information being revealed during deployment by the users adopting the system. 
Further, to perform inference, each user is conventionally required to reveal the \emph{complete} set of features describing its data, even if they may not be \emph{all} essential to infer the intended prediction.
This not only poses severe privacy risks for the individuals using the learning systems but also requires companies and organizations massive human efforts to verify the correctness of the released information. 
Importantly, this setting may also violate the EU General Data Protection Regulation in the principle called \emph{data minimization}, which is cited as: ``Personal data shall be adequate, relevant and limited to what is necessary in relation to the purposes for which they are processed'' \cite{rastegarpanah2021auditing,regulation2016regulation}.

This paper challenges this setting and asks whether it is necessary to require \emph{all} input features for a model to return accurate or approximately accurate predictions at test time.  
We refer to this question as the \emph{redundant information leakage release for inference} problem. 

This unique question has profound implications for privacy in model personalization, where users are required to reveal large amounts of data. 
We show that, under a personalized setting, each individual may need to release only a small subset of their features to produce the \emph{same} prediction errors as those obtained when all features are available. 
Following this result, we also provide an efficient sequential algorithm that selects the smallest set of attributes to reveal by each individual. Evaluation over several learning tasks shows that individuals may be able to report as little as 10\% of their information to ensure the same level of accuracy of a model that uses the complete users' information.

\textbf{Contributions.} In summary, the paper makes the following contributions: {\bf (1)} it initiates a study to analyze which subset of data features should be released by each individual at deployment time, to induce a model having the same level of accuracy as if all features were released; 
{\bf (2)} it links this analysis to a new concept of {\emph{redundant information leakage}} and privacy, 
{\bf (3)} it proposes theoretically motivated and efficient algorithms that choose which attributes should be provided by each individual to minimize redundant information leakage, 
and 
{\bf (4)} it conducts a comprehensive evaluation illustrating that individuals may be able to report as little as 10\% of their information to ensure the same level of accuracy of a model that uses the complete users’ information.

To the best of our knowledge, this is the first work studying this connection between privacy and accuracy at test time.

\section{Related work}
\label{sec:related_work}

While we are not aware of studies on redundant information release for inference problems, we draw connections with differential privacy, feature selection, and active learning.

\textbf{Differential Privacy.}
Differential Privacy (DP) \cite{Dwork:06} is a strong privacy notion which determines and bounds the risk of disclosing sensitive information of individuals participating into a computation. 
In the context of machine learning, DP ensures that algorithms can learn the relations between data and predictions while preventing them from memorizing sensitive information about any specific individual in the training data. 
In such a context, DP is primarily adopted to protect training data \cite{abadi:16, JMLR:v12:chaudhuri11a,xie2018differentially} and thus the setting contrasts with that studied in this work, which focuses on identifying the superfluous features revealed by users at \emph{test time} to attain high accuracy. 
Furthermore, achieving tight constraints in differential privacy often comes at the cost of sacrificing accuracy, while the proposed privacy framework can reduce privacy loss without sacrificing accuracy under the assumption of linear classifiers.

\textbf{Feature selection.} 
Feature selection \cite{chandrashekar2014survey} is the process of identifying and selecting a relevant subset of features from a larger set for use in model construction, with the goal of improving performance by reducing complexity and dimensionality of the data. 
The problem studied in this work can be considered as a specialized form of feature selection with the added consideration of personalized levels, where each individual may use a different subset of features. This contrasts standard feature selection \cite{li2017feature}, which select the same subset of features for each data sample.
Additionally, and unlike traditional feature selection, which is performed during training and independent of the deployed classifier \cite{chandrashekar2014survey}, the proposed framework performs feature selection at deployment time and is inherently dependent on the deployed classifier.

\textbf{Active learning.}
Finally. the proposed framework shares similarities with active learning \cite{fu2013survey,settles2009active}, whose goal is to iteratively select samples for experts to label in order to construct an accurate classifier with the least number of labeled samples. Similarly, the proposed framework iteratively asks individuals to reveal one attribute given their released features so far, with the goal of minimizing the uncertainty in model predictions.

Despite these similarities, the proposed redundant information leakage concept is motivated by a privacy need and pertains to the analysis of features to release to induce the same level of accuracy as if all features were released. 

\section{Settings and Objectives}
\label{sec:settings}

We consider a dataset $D$ consisting of samples $(x, y)$ drawn from an unknown distribution $\Pi$. Here, $x$ is a feature vector with $x \in \mathcal{X}$, and $y \in \mathcal{Y} = [L]$ is a label with $L$ classes. The features in $x$ can be divided into two categories: \emph{public} $x_P$ and \emph{sensitive} features $x_S$. {The sets of public and sensitive features indexes in vector $x$ are represented as 
$P$ and $S$, respectively.
We consider classifiers $f_\theta : \mathcal{X} \to \mathcal{Y}$, which are trained on a public dataset from the same data distribution $\Pi$ above. The classifier produces a score over the classes, $\tilde{f}_\theta(x) \in \mathbb{R}^L$, and a final output class, $f_\theta(x) \in [L]$, given input $x$.
The model's outputs $f_\theta(x)$ and $\tilde{f}_\theta(x)$ are also often referred to as hard and soft predictions, respectively. 

Without loss of generality, we assume that all features in $\mathcal{X}$ are in the range of $[-1, 1]$. In this setting, we are given a trained model $f_\theta$ and, at prediction time, we have access to the public features $x_P$. These features may be revealed in response to a user query or may have been collected by the provider in a previous interaction. For the purpose of illustration, in the scope of the paper we consider the binary classification, where $L = \{0, 1\}$ and $\tilde{f}_{\theta} \in \mathbb{R}$. We refer to the Appendix for the multi-class settings where $L >2$.

In this paper, the term \emph{redundant information leakage} of a model, refers to the number of sensitive features that are revealed unnecessarily, meaning that their exclusion would not significantly impact the model's output. 
{\em Our goal is to design algorithms that accurately predict the output of the model using the smallest possible number of sensitive features, thus minimizing the data leakage at test time}. This objective reflects our desire for privacy.


\begin{figure*}
    \centering
    \includegraphics[width=0.3\textwidth,valign=c]{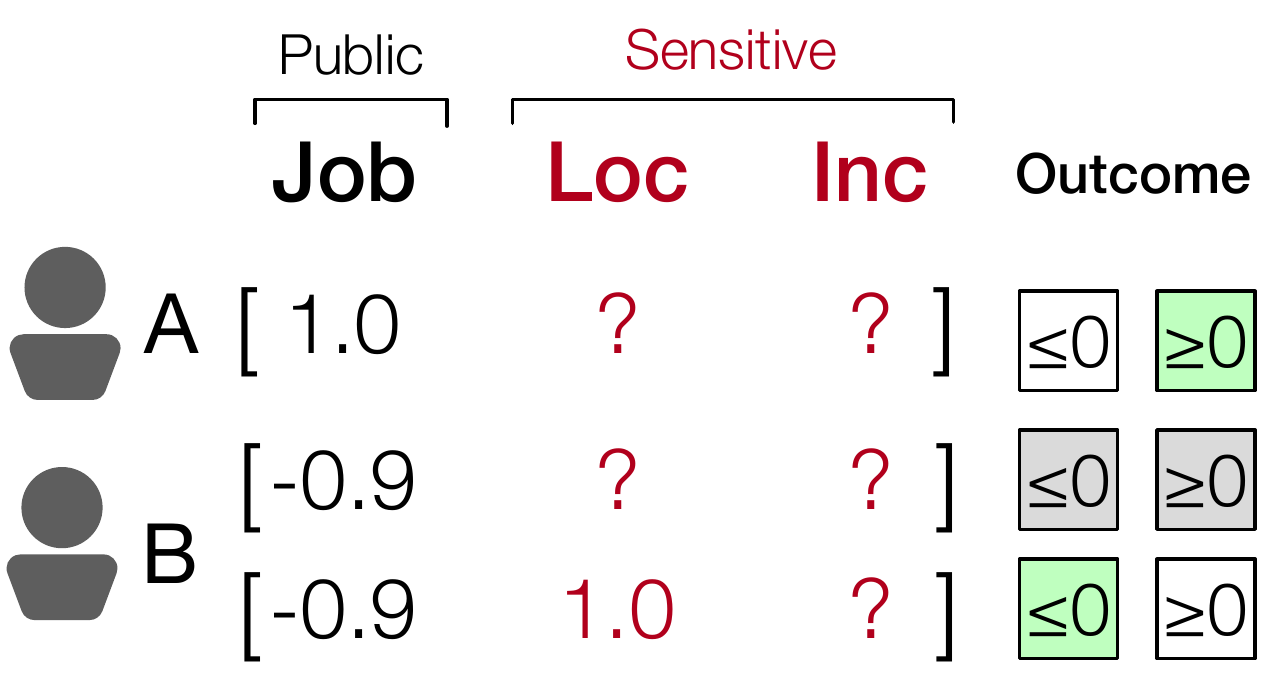}
    \includegraphics[width=0.4\textwidth,valign=c]{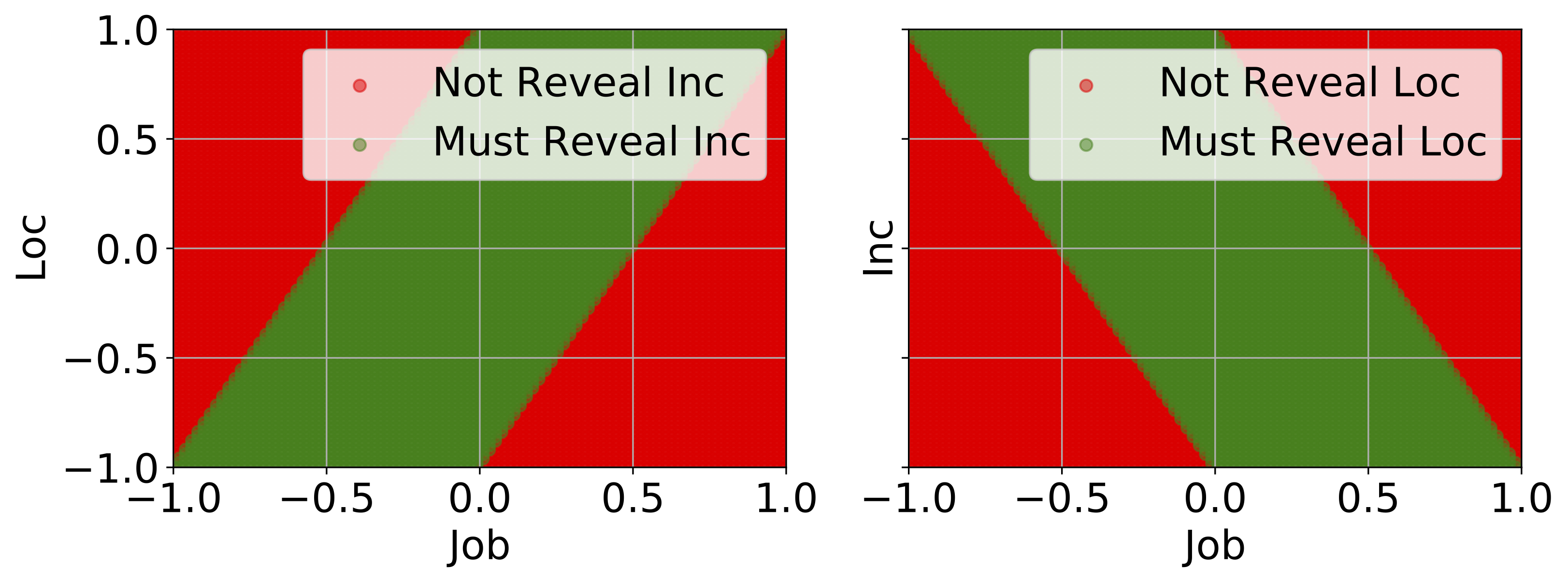} 
    \includegraphics[width=0.28\textwidth,height=80pt,valign=c]{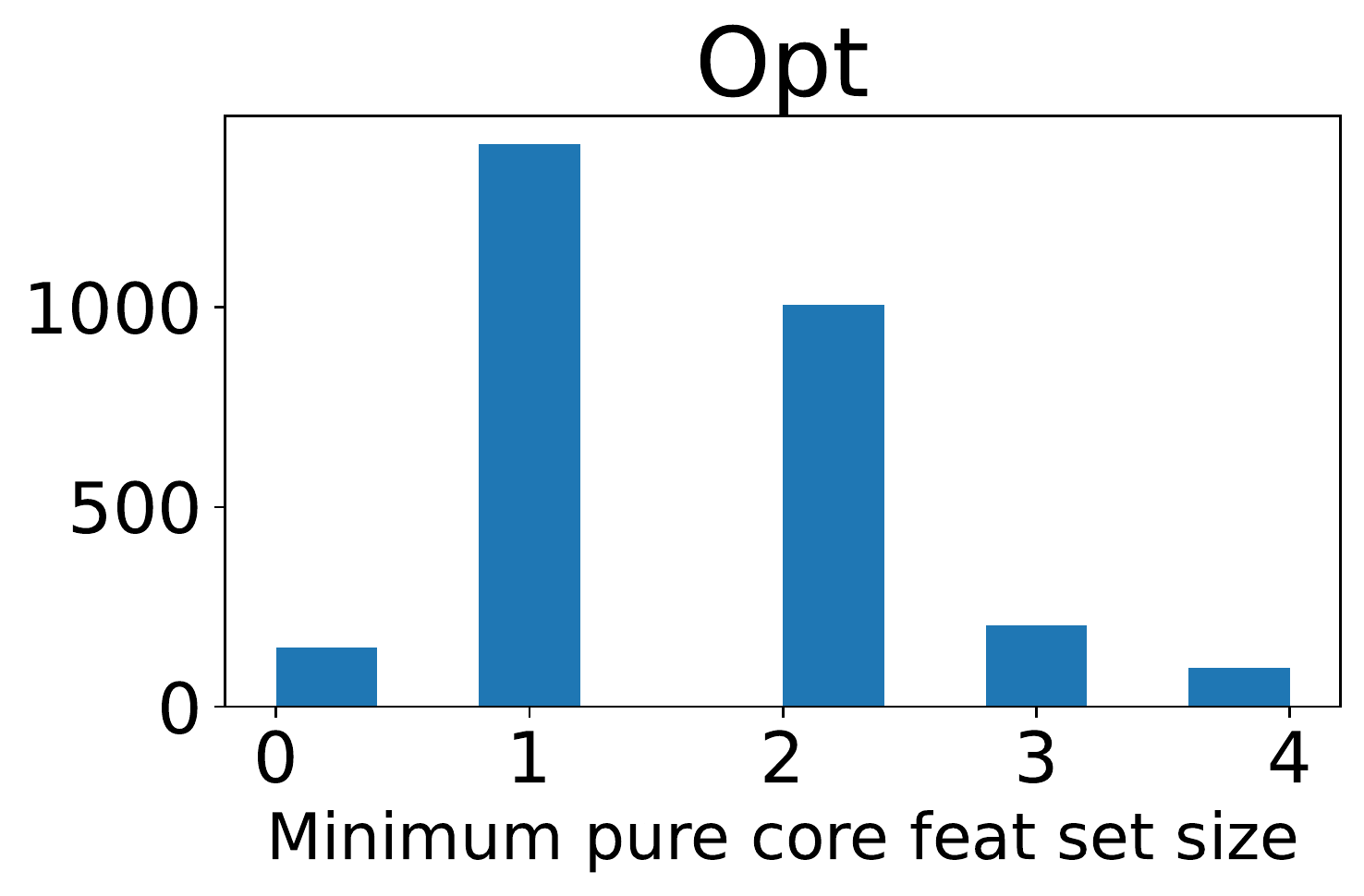}
    \caption{Left: Motivating example. Middle: Feature spaces illustrate the need for users to reveal their sensitive values based on their public values. Right: Frequency associated with the size of the \textbf{minimum} pure core feature set in the Credit card dataset under a logistic regression classifier.}
    \label{fig:motivation}
\end{figure*}

Before delving into the details of the paper, we provide an example to serve as motivation for several key points discussed throughout the document

Consider the illustration in Figure \ref{fig:motivation} (left). It exemplifies a loan approval task in which individual features are represented by the set $\{\textsl{Job}, \textsl{Loc(action)}, \textsl{Inc(ome)}\}$. The example assumes that the feature \textsl{Job} is the public feature $x_P$ while \textsl{Loc} and \textsl{Inc } are sensitive features $x_S$.
The example also considers a trained linear model 
\(f_\theta = 1.0\,\textsl{Job} - 0.5\,\textsl{Loc} + 0.5\,\textsl{Inc} \geq 0.\)
and looks at a scenario in which a user (A) has a public feature $\textsl{Job}=1.0$ and a user (B) has a public feature $\textsl{Job}=-0.9$.  Both users have sensitive feature values that are not known. However, notice how, for user A, the outcome can be determined with certainty even if they do not reveal any additional information; No matter the realizations for the sensitive features of A, their outcome will be unaltered, as all features are bounded in $[-1,1]$. For user B, in contrast, the outcome cannot be determined with certainty based on the public feature alone. But the release of sensitive feature $\textsl{Loc} = 1.0$ is sufficient to determine, with certainty, the classifier outcome. 

Figure \ref{fig:motivation} (middle) further illustrates the values of the sensitive features \textsl{Loc} and \textsl{Inc} in relation to the public feature \textsl{Job} which allows the classifiers' output to be determined without revealing additional information. 

This example highlights two important observations that motivate our study: (1) not all sensitive attributes may be required for decision-making at the time of inference, and (2) the number of relevant sensitive attributes that need to be revealed to make a decision may differ among individuals.

\section{Core Feature Sets} 

With these ideas in mind, this section introduces the concept of core 
feature set and its relationship with the uncertainty of the model predictions.
We discuss the main results in the paper and report all proofs in 
Appendix \ref{app:proofs}. 

Throughout the paper, we use $R$ and $U$ to denote, respectively, the set of all revealed and unrevealed features indices of the sensitive features $S$. 
Given a vector $x$ and an index set $I$, we denote $x_I$ as the vector of entries indexed by $I$ and $X_I$ as the associated random variable. 
Finally, we write $f_{\theta}(X_U, X_R\!=\!x_R)$ as a shorthand for $f_\theta(X_U, X_R\!=\!x_R, X_P\!=\!x_P)$ to denote the prediction made by the model when the features in $U$ are unrevealed. 

We aim to create algorithms that can identify the smallest set of sensitive attributes to reveal to render the model's output certain (with high probability) regardless of the unrevealed attributes' values. 
Such a set is denoted \emph{core feature set}.

\begin{definition}[Core feature set]
\label{def:core_set}
Consider a subset $R$ of sensitive features $S$, and let $U \!=\! S \setminus R$ be the unrevealed features. The set $R$ is a core feature set if, for some $\tilde y \in \cY$,
\begin{equation}\label{eq:cfs}
    \Pr\big( f_\theta(X_{U}, X_{R} = x_R) = \tilde y \big) \geq 1-\delta,
\end{equation}
where $\delta \in [0,1]$ is a failure probability. 
\end{definition}
When $\delta = 0$ the core feature set is called \textbf{pure}. 
Additionally, the label $\tilde y$ satisfying Equation \eqref{eq:cfs} 
is called the \emph{representative label} for the core feature set $R$. 
The concept of the representative label $\tilde y$ is crucial for the 
algorithms that will be discussed later. These algorithms use limited 
information to make predictions. When predictions are made using a set 
of unrevealed features, the representative label $\tilde{y}$ will be 
used in place of the model's prediction.

The following is a useful property of core feature sets used by this 
work to minimize redundant information leakage.
\begin{proposition}
    \label{thm:delta_vs_entropy}
    Let $R \subseteq S$ be a core feature set with failure probability $\delta <0.5$. 
    Then, there exists a monotonic decreasing function $\epsilon:\mathbb{R}\to\mathbb{R}$ with $\epsilon(0)=0$ such that: 
  \[
      H\big[ f_\theta( X_U, X_R = x_R ) \big] \leq \epsilon(\delta),
  \]
where $H[Z] \text{=} -\sum_{z \in [L]}\Pr(Z=z) \log \Pr(Z=z)$ 
is the entropy of the random variable $Z$. 
\end{proposition}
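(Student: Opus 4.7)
The plan is to exploit the definition of a core feature set directly: it supplies a representative label $\tilde y$ with $p := \Pr\bigl(f_\theta(X_U, X_R = x_R) = \tilde y\bigr) \geq 1-\delta$, and in the binary setting adopted in the main text the random variable $Z := f_\theta(X_U, X_R = x_R)$ takes only two values, so its entropy is the binary entropy $H_b(p) = -p\log p - (1-p)\log(1-p)$. Hence the claim reduces to bounding $H_b(p)$ in terms of $\delta$ alone, which should give an explicit $\epsilon(\delta)$ independent of the model $f_\theta$ or the distribution $\Pi$.

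First, I would observe that since $\delta < 1/2$, the hypothesis forces $p > 1/2$. On the interval $[1/2,1]$ the binary entropy $H_b$ is monotone, so $H_b(p) \leq H_b(1-\delta)$, and by the symmetry $H_b(q)=H_b(1-q)$ we get $H_b(p) \leq H_b(\delta)$. I would therefore set
\[
  \epsilon(\delta) \;:=\; -\delta \log \delta - (1-\delta)\log(1-\delta)
\]
for $\delta \in [0,1/2]$ (extending it by a constant outside this range if one wants it defined on all of $\mathbb{R}$). This candidate clearly satisfies $\epsilon(0)=0$ and is monotone in $\delta$ on $[0,1/2]$, which is the condition one actually uses; note that $H_b$ is in fact \emph{increasing} on $[0,1/2]$, so the bound tightens as $\delta\to 0$ in the expected direction.

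There is no real technical obstacle here; the only subtlety is justifying that reducing to a maximization over distributions $(p,1-p)$ with $p \geq 1-\delta$ suffices, which is immediate in the binary case because $Z$ has a two-point support. The only place where a little more care is needed is the multi-class extension described in the appendix ($L>2$): here $Z$'s distribution is not pinned down by $p$ alone, and the natural route is a Fano-style argument that maximizes entropy over all distributions on $[L]$ whose top mass is at least $1-\delta$. That maximum is attained by placing the remaining $\delta$ uniformly over the other $L-1$ classes, yielding $\epsilon(\delta) = H_b(\delta) + \delta\log(L-1)$, which again vanishes at $\delta=0$ and is monotone on $[0,1/2]$, preserving the statement.
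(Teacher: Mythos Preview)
Your proof is correct and follows essentially the same route as the paper's: reduce to the binary entropy $H_b(p)$ with $p\geq 1-\delta>1/2$ and invoke the monotonicity of $H_b$ on $[1/2,1]$ to bound the entropy by $H_b(1-\delta)=H_b(\delta)$. You are in fact more careful than the paper---you write down $\epsilon(\delta)=H_b(\delta)$ explicitly, flag the monotonicity-direction wrinkle in the statement, and supply the multi-class Fano-type bound $H_b(\delta)+\delta\log(L-1)$ that the paper merely asserts ``can be achieved through a similar process.''
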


This property highlights the relationship between core feature sets 
and entropy associated with the model that uses incomplete information. 
Smaller $\delta$ values result in less uncertainty in the model's 
predictions and when $\delta$ is equal to zero (or when $R$ is a pure 
core feature set), we have complete knowledge of the model's predictions 
even without observing $x_{U}$.
Thus this property also illustrates the relationship between the 
failure probability $\delta$ and the uncertainty of model predictions. 

It is worth noticing that more accurate predictions also require 
revealing more information, as highlighted in the previous result and 
the following celebrated information theoretical result.
\begin{proposition}
    \label{thm:cond_entropy}
    Given two subsets $R$ and $R'$ of sensitive features $S$, with 
    $R \subseteq R'$, 
    \[
      H\big( f_\theta(X_U, X_R=x_R) \big) \geq 
      H\big( f_\theta(X_{U'}, X_{R'}=x_{R'}) \big),
    \]
    where $U=S\setminus R$ and $U'=S\setminus R'$.
\end{proposition}
Thus, the parameter $\delta$ plays an important role in balancing the trade-off between the \emph{privacy loss} and the \emph{model performance}. 
It controls how much sensitive information needs to be revealed to make accurate predictions (for a desired level of uncertainty in the model's predictions). 
As $\delta$ gets larger, less sensitive features need to be revealed, leading to smaller information leakage but also less accurate model predictions, and vice-versa. 

Note that, as pointed out in the previous example, the core feature set is not unique for all users. This is also highlighted in Figure \ref{fig:motivation} (right), which illustrates the minimum pure core feature sets computed using a logistic regression classifier on the Credit dataset \cite{UCIdatasets}. The figure shows that many individuals need to release \emph{no} additional information to obtain the model predictions and that most individuals can get accurate model predictions with certainty by releasing just $\leq 2$ sensitive features. 
These connections, together with the previous observations linking core feature sets to entropy motivate the proposed online algorithm.

\section{Personalized feature release (PFR)}
\label{sec:PFR}
The goal of the proposed algorithm, called Personalized feature release 
(PFR), is to reveal sensitive features one at a time based on their 
\emph{released} feature values. This section provides a high-level 
description of the algorithm and outlines the challenges in some of 
its aspects. Next, Section \ref{sec:PFR_linear}, applies PFR to 
linear classifiers and discusses its performance on several datasets 
and benchmarks. Further, Section \ref{sec:PFR_nonlinear}, extends PFR 
to non-linear classifiers and considers an evaluation over a range of 
standard datasets.
In the subsequent sections, we assume that the input features are 
jointly distributed as Gaussians with mean vector $\mu$ 
and covariance matrix $\Sigma$, 
unless stated otherwise. Additionally, as our motivation suggests, we 
will concentrate solely on maintaining privacy at deployment time.

\paragraph{High-level ideas of PFR.}
At a high level, the algorithm chooses a feature to reveal by inspecting 
the posterior probabilities $\Pr(X_j | X_R = x_R, X_P = x_P)$ for each unrevealed feature $j \in U$ and with respect to the revealed sensitive features $x_R$ and the public features $x_P$.
Given the current set of features revealed $x_R$ and unrevealed $x_U$, the algorithm chooses the next feature $j \in U$ such that:
\begin{align}
  j &= \argmax_{j \in U} F(x_R, x_j; \theta) \notag\\
  \label{eq:scoring}
    &= \argmax_{j \in U} -H\big[ f_\theta(X_j=x_j, X_{U\setminus \{j\}}, X_R=x_R)\big],
\end{align}
where $F$ is a \emph{scoring function} that measures how much information can be gained on the model's predictions if feature $X_j$ is revealed. 
Upon revealing feature $X_j$ with a value of $x_j$, the algorithm adjusts the posterior probabilities for all remaining unrevealed features. The process concludes when either all sensitive features have been disclosed or a core feature set has been identified. 

The remainder of the section delves into the difficulties of calculating the scoring function $F$, including the unknown value of $X_j$ beforehand and methods for determining if a set of revealed features constitutes a core feature set.

\subsection{Computing the scoring function $F$}
The scoring function $F$ quantifies the level of certainty in model 
predictions when a user reveals the value of feature $X_j$. 
There are two challenges to consider. First, the value of $X_j$ is 
unknown until the decision is made, challenging the computation of the 
entropy function. Second, even if the value of $X_j$ were known, 
determining the entropy of model predictions in an efficient manner  
is a further difficulty.
We next discuss how to overcome these challenges.

To address the first challenge, we exploit the information encoded in 
the revealed features to infer $x_j$. 
Thus, we can compute the posterior probability $\Pr(X_j | X_R \!=\! x_R)$ 
of the unrevealed feature $X_j$ given the values of the revealed ones. 
This estimate allows us to modify the scoring function, abbreviated as 
$F(X_j)$, to be the expected negative entropy given the randomness of $X_j$.
\begin{subequations}
\label{eq:exp_entropy}
\begin{align}
    F(X_j) &= 
    \mathbb{E}_{X_j} - \big[H[f_\theta(X_j, X_{U\setminus \{j\}}, X_R \!=\! x_R)\big] \notag\\
    &= - \int {H\big[f_\theta (X_j\!=\!z, X_{U\setminus \{j\}}, X_R\!=\!x_R)\big]}
    \label{eq:exp_entropy1}\\
    &\hspace{30pt} \times 
    {\Pr(X_j \!=\! z | X_R \!=\! x_R) dz},
    \label{eq:exp_entropy2}
\end{align}
\end{subequations}
where $z \in {\cal X}_j$ is a value in the support of $X_j$. 

Estimating this scoring function efficiently is however challenged by 
the presence of two key components. The first (Equation \eqref{eq:exp_entropy1}) 
is the entropy of the model's prediction given a specific unrevealed 
feature value, $X_j = z$. This prediction is a function of the random 
variable $X_{U \setminus \{j\}}$, and, due to Proposition \ref{thm:delta_vs_entropy}, 
its estimation is related to the conditional densities $\Pr(X_{U \setminus \{j\}} \vert X_R = x_R, X_j = z)$.
The second component (Equation \eqref{eq:exp_entropy2}) is the conditional probability 
$Pr(X_j = z| X_R = x_R)$. Computing these conditional densities efficiently is discussed next.

The following result relies on the joint Gaussian assumption of the input features and will be useful in providing a computationally efficient method to estimate such conditional density functions. In the following, $\Sigma_{IJ}$ represents a sub-matrix of size $|I| \times |J|$ of a matrix $\Sigma$ formed by selecting rows indexed by $I$ and columns indexed by $J$. 

\begin{proposition}
\label{prop:2} The conditional distribution of any subset of unrevealed features 
$U' \in U$, given the the values of released features $X_R =x_R$ is given by:
\begin{align*}
\Pr(X_{U'} | X_R = x_R)  &= 
    \mathcal{N}\bigg(\mu_{U'}  + \Sigma_{U', R} \Sigma^{-1}_{R, R} 
        (x_R - \mu_R),\\ 
        & \Sigma_{U',U'} - \Sigma_{U',R}\Sigma^{-1}_{R,R} \Sigma_{R,U'} \bigg),
\end{align*}
where $\Sigma$ is the covariance matrix 
 \end{proposition}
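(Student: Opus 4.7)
The plan is to use the standard Gaussian conditioning argument, which avoids brute-force density manipulation. The key idea is to construct a random vector that is a linear function of $X_{U'}$ and $X_R$ and that is uncorrelated with $X_R$; since everything is jointly Gaussian, uncorrelatedness upgrades to independence, and the conditional distribution follows immediately.

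Concretely, I would define
\[
    Y \;=\; X_{U'} - \mu_{U'} - \Sigma_{U',R}\, \Sigma_{R,R}^{-1}\,(X_R - \mu_R).
\]
First I would compute $\mathbb{E}[Y] = 0$ directly from $\mathbb{E}[X_{U'}] = \mu_{U'}$ and $\mathbb{E}[X_R] = \mu_R$. Then I would show $\mathrm{Cov}(Y, X_R) = 0$ using bilinearity of covariance:
\[
    \mathrm{Cov}(Y, X_R) \;=\; \Sigma_{U',R} - \Sigma_{U',R}\,\Sigma_{R,R}^{-1}\,\Sigma_{R,R} \;=\; 0.
\]
Since $(Y, X_R)$ is a linear transformation of the jointly Gaussian vector $(X_{U'}, X_R)$, it is itself jointly Gaussian, so uncorrelated implies independent. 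Next I would compute $\mathrm{Cov}(Y, Y)$, again by bilinearity:
\[
    \mathrm{Cov}(Y,Y) \;=\; \Sigma_{U',U'} - \Sigma_{U',R}\,\Sigma_{R,R}^{-1}\,\Sigma_{R,U'},
\]
which is exactly the Schur complement appearing in the claim.

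To finish, I would invert the definition of $Y$ to write $X_{U'} = \mu_{U'} + \Sigma_{U',R}\,\Sigma_{R,R}^{-1}(X_R - \mu_R) + Y$, and condition on $X_R = x_R$. Because $Y \perp X_R$, the conditional distribution of $Y$ given $X_R = x_R$ equals its marginal distribution $\mathcal{N}(0, \Sigma_{U',U'} - \Sigma_{U',R}\Sigma_{R,R}^{-1}\Sigma_{R,U'})$, so $X_{U'} \mid X_R = x_R$ is Gaussian with the stated mean and covariance.

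The only mildly delicate step is guaranteeing that $\Sigma_{R,R}^{-1}$ exists, i.e.\ that the marginal covariance of the revealed features is non-singular. Under the paper's standing assumption that the full $\Sigma$ is positive definite, any principal submatrix is also positive definite, so this holds. If one wants to allow a degenerate $\Sigma_{R,R}$, the same argument works with the Moore--Penrose pseudoinverse on the column space of $\Sigma_{R,R}$, but I would not belabor that point since the paper implicitly works in the non-degenerate regime.
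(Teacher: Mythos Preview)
Your argument is correct and is the standard proof of the Gaussian conditioning formula. The paper itself does not supply a proof at all: it simply states that this is a well-known property of the multivariate Gaussian and refers the reader to Section~2.3.2 of Bishop's \emph{Pattern Recognition and Machine Learning}. So your write-up is strictly more self-contained than what the paper provides; the decomposition $X_{U'} = \mu_{U'} + \Sigma_{U',R}\Sigma_{R,R}^{-1}(X_R-\mu_R) + Y$ with $Y \perp X_R$ is exactly the derivation Bishop gives, and your remark about $\Sigma_{R,R}$ being invertible because principal submatrices of a positive-definite matrix are positive definite is the right justification for the one implicit assumption.
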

 
To complete Equation \ref{eq:exp_entropy}, we must estimate the entropy 
$H[f_\theta (X_j = z, X_{U\setminus \{j\}}, X_R=x_R)]$ for a specific 
instance $z$, drawn from the distribution $\Pr(X_j | X_R = x_R)$ 
(see Equation \eqref{eq:exp_entropy1}). This can be achieved by 
estimating $\Pr(\tilde{f}_{\theta}(X_j = z, X_{U\setminus\{j\}}, X_R = x_R))$, as $f_\theta = \bm{1} \{\tilde{f}_\theta \geq 0\}$, where $\bm{1}$ is the indicator function and 
in the following sections, we will show how to assess this estimate 
for linear and non-linear classifiers. 
Finally, by approximating the distribution over soft model predictions through Monte Carlo sampling, 
we can compute the score function in $F(X_j)$, as
\begin{align}
     F(X_j) &= \mathbb{E}_{X_j} - \left[ H \left[f_\theta (X_j, X_{U\setminus\{j\}}, X_R =x_R)\right] \right] \label{eq:monte_carlo}\\
     &\approx  -\sum_{z' \in {\bm Z}} 
     H\left[ f_\theta (X_j=z', X_{U\setminus\{j\}}, X_R =x_R) \right],\notag   
\end{align}
where $\bm Z$ is a set of random samples drawn from $\Pr(X_j | X_R = x_R)$ and estimated through Proposition \ref{prop:2}, which thus can be computed efficiently.

\subsection{Testing a core feature set}

As reviewed above, the proposed iterative algorithm stops when it determines 
whether a subset $R$ of the sensitive feature set $S$ is a core feature 
set. We divide this verification process into two cases:
When $\delta=0$, verifying that $R$ is a pure core feature set only 
requires checking if {$f_{\theta}(X_U, X_R =x_R)$} is constant for all 
realizations of $X_U$. We demonstrate, in Section 
\ref{sec:PFR_linear}, that this can be accomplished in linear 
time for linear classifiers without any input distribution assumptions. 
When $\delta>0$, such a property is no longer valid. Recall that, in 
order to verify a core feature set as per Definition \ref{def:core_set}, 
we need to estimate the distribution of $\Pr(\tilde{f}_{\theta}(X_U, X_R =x_R))$. 
In Section \ref{sec:PFR_linear}, we show that one can analytically estimate 
this distribution for linear classifiers, while in Section \ref{sec:PFR_nonlinear} 
we show how to approximate this distribution locally, and use this estimate 
to derive a simple, yet effective (in practice), estimator.

\section{PFR for linear classifiers}
\label{sec:PFR_linear}

This section will  devote to estimating the distribution 
$\Pr(\tilde{f}_{\theta}(X_j \!=\! z, X_{U\setminus\{j\}}, X_R \!=\! x_R))$, or 
simply expressed as $\Pr(\tilde{f}_{\theta}(X_U, X_R \!=\! x_R))$  and provides an instantiation 
of the PFR algorithm for linear classifiers. 
In particular, it shows that when the input features are jointly Gaussian, 
both the estimation of the conditional distributions required to compute 
the scoring function $F(X_j)$ and the termination condition to test whether 
a set of revealed features is a core feature set, can be computed 
efficiently. This is an important property for the developed algorithms, 
which are considered online and interactive protocols.

\label{sec:PFR_linear}
\subsection{Efficiently Estimating 
$\Pr(\tilde{f}_{\theta}(X_{U}, X_R = x_R))$}
For a linear classifier $\tilde{f}_{\theta} = \theta^\top x$, and under 
the Gaussian distribution assumption adopted, the model predictions 
$\tilde{f}_{\theta}(x)$ are also Gaussian, as highlighted by the 
following result.
\begin{proposition}
\label{prop:3}
The model predictions before thresholding, $\tilde{f}_{\theta}(X_U, X_R = x_R) =\theta_U X_U + \theta_R x_R $ 
is a random variable with a Gaussian distribution $\mathcal{N}\big(m_f, {\sigma_f^2} \big)$, 
where
\begin{align}
    m_f &= \theta_R x_R + \theta^\top_U \big(  \mu_U + \Sigma_{U, R} \Sigma^{-1}_{R,R} ( x_{R} - \mu_R) \big) \label{eq:prop4_a}\\ 
    \sigma^2_f &= \theta^\top_{U} \big( \Sigma_{U,U} - \Sigma_{U, R} \Sigma^{-1}_{R,R} \Sigma_{R,U} \big) \theta_{U}, \label{eq:prop4_b}
\end{align}
where $\theta_U$ is the sub-vector of parameters $\theta$ corresponding
to the unrevealed features $U$. 
 \end{proposition}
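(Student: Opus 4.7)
The plan is to reduce this proposition to Proposition~\ref{prop:2} together with the standard fact that an affine transformation of a multivariate Gaussian is univariate Gaussian. Since $x_R$ is treated as a fixed realization (we are conditioning on $X_R = x_R$), the quantity $\tilde{f}_\theta(X_U, X_R = x_R) = \theta_U^\top X_U + \theta_R^\top x_R$ is an affine function of the random vector $X_U$, with the second term being a constant offset.

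First, I would invoke Proposition~\ref{prop:2} with $U' = U$ to obtain the conditional distribution
\[
  X_U \mid X_R = x_R \;\sim\; \mathcal{N}\!\left(\mu_U + \Sigma_{U,R}\Sigma_{R,R}^{-1}(x_R - \mu_R),\; \Sigma_{U,U} - \Sigma_{U,R}\Sigma_{R,R}^{-1}\Sigma_{R,U}\right).
\]
Call the mean vector $\tilde{\mu}_U$ and the covariance matrix $\tilde{\Sigma}_{U,U}$ for brevity. Next, I would apply the standard rule that for any deterministic vector $a$ and deterministic scalar $b$, if $Y \sim \mathcal{N}(\tilde{\mu}, \tilde{\Sigma})$ then $a^\top Y + b \sim \mathcal{N}(a^\top \tilde{\mu} + b,\; a^\top \tilde{\Sigma}\, a)$. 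Setting $a = \theta_U$ and $b = \theta_R^\top x_R$ directly yields the Gaussianity of $\tilde{f}_\theta(X_U, X_R = x_R)$.

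Finally, I would substitute $\tilde{\mu}_U$ and $\tilde{\Sigma}_{U,U}$ back into the mean and variance expressions to recover Equations~\eqref{eq:prop4_a} and~\eqref{eq:prop4_b}: the mean becomes $\theta_R^\top x_R + \theta_U^\top \bigl(\mu_U + \Sigma_{U,R}\Sigma_{R,R}^{-1}(x_R - \mu_R)\bigr)$, and the variance becomes $\theta_U^\top \bigl(\Sigma_{U,U} - \Sigma_{U,R}\Sigma_{R,R}^{-1}\Sigma_{R,U}\bigr)\theta_U$, matching the claim.

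Honestly, there is no real obstacle here: the proposition is a direct corollary of Proposition~\ref{prop:2} and the closure of the Gaussian family under affine maps. The only minor bookkeeping step is to make sure the partitioning of $\theta$ into $\theta_U$ and $\theta_R$ is handled consistently with the partitioning of $\Sigma$ used in Proposition~\ref{prop:2}, and to emphasize that conditioning on $X_R$ turns $\theta_R^\top x_R$ into a constant shift, which contributes to the mean but not to the variance.
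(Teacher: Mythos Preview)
Your proposal is correct and matches the paper's own proof, which simply notes that a linear combination of Gaussian variables is Gaussian and invokes Proposition~\ref{prop:2} for the posterior of $X_U$. You have merely spelled out the affine-transformation step and the substitution in more detail than the paper does.
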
 

The above result is used to assist in calculating the conditional 
distribution of model predictions $f_\theta( x)$, following thresholding. 
This is a random variable that adheres to a Gaussian distribution, as shown 
next, and will be used to compute the entropy of the model predictions, 
as well as to determine if a subset of features constitutes a core set.
 
\begin{proposition}
\label{prop:4}
Let the model predictions prior thresholding $\tilde{f}_\theta(X_U, X_R =x_R)$ 
be a random variable following a Gaussian distribution $\mathcal{N} (m_f, 
\sigma^2_f)$. Then, the model prediction following thresholding $f_{\theta}( X_U, X_R =x_R)$ 
is a random variable following a Bernoulli distribution $Bern(p)$ 
with $p = \Phi( \frac{m_f}{\sigma_f})$, where $\Phi(\cdot)$ refers to 
the  CDF of the standard normal distribution, and $m_f$ and $\sigma_f$,
are given in Equations \eqref{eq:prop4_a} and \eqref{eq:prop4_b}, respectively.
\end{proposition}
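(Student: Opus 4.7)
The plan is to observe that this proposition is essentially a direct consequence of Proposition \ref{prop:3} combined with the definition $f_\theta(x) = \mathbf{1}\{\tilde f_\theta(x) \geq 0\}$. Since $f_\theta$ takes values in $\{0,1\}$, it must follow a Bernoulli distribution, so the only substantive task is to identify its success probability $p = \Pr(f_\theta(X_U, X_R=x_R)=1)$.

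First, I would rewrite this success probability as
\[
p = \Pr\bigl(\tilde f_\theta(X_U, X_R = x_R) \geq 0\bigr),
\]
using the indicator representation. Because Proposition \ref{prop:3} has already established that $\tilde f_\theta(X_U, X_R = x_R) \sim \mathcal{N}(m_f, \sigma_f^2)$ (with $m_f$ and $\sigma_f^2$ given by Equations \eqref{eq:prop4_a}--\eqref{eq:prop4_b}), I would standardize: introducing $Z = (\tilde f_\theta - m_f)/\sigma_f \sim \mathcal{N}(0,1)$, we have
\[
p = \Pr\!\left(Z \geq -\tfrac{m_f}{\sigma_f}\right) = 1 - \Phi\!\left(-\tfrac{m_f}{\sigma_f}\right).
\]
Then I would invoke the symmetry of the standard normal CDF, $1 - \Phi(-t) = \Phi(t)$, to conclude $p = \Phi(m_f/\sigma_f)$, which is exactly the claimed parameter.

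There is essentially no hard step; the only detail worth flagging is the edge case $\sigma_f = 0$, which happens when $\theta_U = 0$ or when the conditional covariance $\Sigma_{U,U} - \Sigma_{U,R}\Sigma_{R,R}^{-1}\Sigma_{R,U}$ annihilates $\theta_U$. In that degenerate case, $\tilde f_\theta(X_U, X_R = x_R)$ equals the constant $m_f$ almost surely, so $f_\theta$ is a deterministic Bernoulli with $p = \mathbf{1}\{m_f \geq 0\}$; this coincides with the limit of $\Phi(m_f/\sigma_f)$ as $\sigma_f \to 0^+$ (interpreting $\Phi(\pm\infty)$ as $0$ or $1$), so the statement remains consistent under the natural convention. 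Aside from noting this, the argument is a one-line standardization and an appeal to the symmetry of $\Phi$.
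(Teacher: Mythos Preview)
Your proposal is correct and follows essentially the same approach as the paper: both use the indicator representation $f_\theta = \bm{1}\{\tilde f_\theta \geq 0\}$ and then read off the Bernoulli parameter from the normal CDF. Your version is simply more explicit (carrying out the standardization step and noting the $\sigma_f = 0$ edge case), whereas the paper compresses this into a one-line appeal to ``properties of the normal distribution.''
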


\begin{figure*}[tb]
\captionsetup{justification=centering}
\centering
\includegraphics[width = 1.0\linewidth]{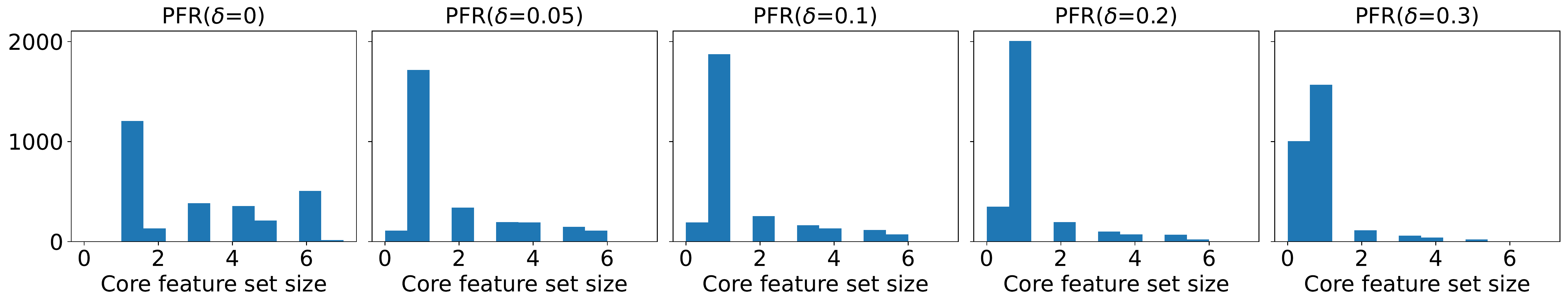}
\caption{Histogram of core feature set size  for PFR under different $\delta$ on Bank  dataset when $|S|=7$ and the underlying classifier is Logistic Regression}
\label{fig:hist_linear}
\end{figure*}

\begin{algorithm}[t]
  \caption{PFR for linear classifiers}
  \label{alg:alg1}
  \setcounter{AlgoLine}{0}
  \SetKwInOut{Input}{input}
  \SetKwInOut{Output}{output}
  \SetKwRepeat{Do}{do}{while}
  \Input{A test sample $x$; Training data $D$} 
  \Output{A core feature set $R$  and its representative label $\tilde y$\!\!\!\!\!\!\!}
  $\mu \gets \frac{1}{|D|} \sum_{(x,y) \in D} x$\\
  $\Sigma  \gets \frac{1}{|D|} \sum_{(x,y) \in D} (x -\mu) (x - \mu)^\top$\\ 
  Initialize $R = \emptyset$\\
  \While{True} {
    \eIf{$R$ is a core feature set with repr.~label $\tilde y$}{
      \Return{$(R, \tilde y)$}
    }{
    \ForEach{$j \in U$} {
       Compute $\Pr(X_j | X_R = x_R)$  (using Prop.~\ref{prop:2})\\
      $\bm Z \gets \text{sample}(\Pr(X_j |X_R = x_R))$ T times\\
       Compute $\Pr\left(f_{\theta}(X_j = z, X_{U\setminus \{j\}}X_R = x_R)\right)\!\!\!\!\!\!\!$ (using Prop.~\ref{prop:3} and \ref{prop:4})\\
       Compute $F(X_j)$ (using Eq. \eqref{eq:monte_carlo})
       }
    }
     $j^* \gets \argmax_j F(X_j)$\\
     $(R, U) \gets R \cup \{j^*\},\; U \setminus \{j^*\}$
    }
\end{algorithm}

\subsection{Testing pure core feature sets}
\label{sec:pure_core_set_linear}
In this subsection, we outline the methods for determining if a subset $U$ is a pure core feature set, and, if so, identifying its representative label. As per Definition \ref{def:core_set}, $U$ is a pure core feature set if $f_{\theta}(X_U, X_R =x_R) = \tilde{y}$ for all $X_U$. Equivalently, $\tilde{f}_{\theta}(X_U, X_R = x_R) = \theta^\top_U X_U + \theta^\top_R x_R$ must have the same sign (either positive or negative) for all $X_U$ in the range of $[-1, 1]^{|U|}$. Rather than evaluating all possible values of $\tilde{f}_{\theta}(X_U, X_R = x_R)$, we only need to examine if the maximum and minimum values have the same sign. By virtue of the linear programming property under the box constraint $X_U \in [-1, 1]^{|U|}$, it follows that:
\begin{align}
\begin{split}
\max_{X_U} \,  & \theta^\top_U X_U + \theta^\top_R x_R = \|\theta\|_1 + \theta^\top_R x_R \\
\min_{X_U} \, & \theta^\top_U X_U + \theta^\top_R x_R = - \|\theta\|_1 + \theta^\top_R x_R.
\label{eq:min_max_linear}
\end{split}
\end{align}
Therefore, if the sum $\|\theta\|_1 + \theta^\top_R x_R$ and the difference $-\|\theta\|_1 + \theta^\top_R x_R$ are both negative (non-negative), then $U$ is considered a pure core feature set with representative label $\tilde{y}=0$ ($\tilde{y}=1$), otherwise $U$ is not a pure core feature set.

Importantly, determining whether a subset $R$ of sensitive features $S$ constitutes a pure core feature set can be accomplished in linear time with respect to the  number of features. 
\begin{proposition}
\label{thm:core_linear_verf}
Assume $f_\theta$ is a linear classifier. Then, determining if a 
subset $U$ of sensitive features $S$ is a \emph{pure} core feature set 
can be performed in $O(|P| + |S|)$ time.
\end{proposition}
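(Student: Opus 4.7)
The plan is to extract the algorithmic procedure directly from the linear-programming characterization already given in~\eqref{eq:min_max_linear} and then tally its arithmetic cost. By the preceding discussion, deciding whether $U$ is a pure core feature set reduces to checking whether the affine map $\tilde f_\theta(X_U, X_R = x_R) = \theta_U^\top X_U + \theta_R^\top x_R + \theta_P^\top x_P$ preserves a single sign as $X_U$ ranges over the box $[-1,1]^{|U|}$. Since this function is continuous on a connected domain, sign-constancy is equivalent to its maximum $M^+$ and minimum $M^-$ over the box sharing a sign, and coordinatewise decoupling of a linear objective over a product of intervals yields the closed forms
\[
M^\pm \;=\; \pm\|\theta_U\|_1 \;+\; \theta_R^\top x_R \;+\; \theta_P^\top x_P.
\]

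Next I would tally the operations needed to evaluate these two scalars. The $\ell_1$ norm $\|\theta_U\|_1$ is a sum of $|U|$ absolute values; $\theta_R^\top x_R$ is an inner product of length $|R|$; and $\theta_P^\top x_P$ is an inner product of length $|P|$. Each is a single pass over its index set, and since $R$ and $U$ partition $S$ the combined cost is $O(|P|+|R|+|U|) = O(|P|+|S|)$. Two sign comparisons---an $O(1)$ step---then determine whether $U$ qualifies as a pure core feature set and, in the affirmative case, read off its representative label: $\tilde y = 1$ when both extrema are nonnegative, $\tilde y = 0$ when both are negative, and declare non-membership otherwise.

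No step poses a genuine obstacle. The only point that deserves a line of justification rather than a pointer to~\eqref{eq:min_max_linear} is the equivalence between sign-constancy of $\tilde f_\theta$ on the box and sign-agreement of $M^+$ and $M^-$. This follows from the intermediate value theorem applied to the segment joining the argmax and argmin vertices of the box: if $M^+$ and $M^-$ had opposite signs, $\tilde f_\theta$ would cross zero along this segment, and hence the hard prediction $f_\theta = \mathbf{1}\{\tilde f_\theta \geq 0\}$ would take both values, contradicting purity; the converse is immediate from monotonicity of the extrema. With this observation the $O(|P|+|S|)$ per-query bound is established, which is precisely the cost charged to each core-set check in Algorithm~\ref{alg:alg1}.
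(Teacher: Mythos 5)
Your proof is correct and takes essentially the same route as the paper's: reduce purity to sign-agreement of the maximum and minimum of the affine score over the box $[-1,1]^{|U|}$, evaluate both extrema in closed form via the $\ell_1$ norm, and charge one linear pass over the coordinates for a total of $O(|P|+|S|)$ operations plus $O(1)$ sign tests. You are in fact slightly more careful than the paper, which writes $\|\theta\|_1$ in Equation \eqref{eq:min_max_linear} where $\|\theta_U\|_1$ is meant and leaves the public term $\theta_P^\top x_P$ implicit in its shorthand; your intermediate-value justification of the sign-agreement equivalence (asserted without comment in the paper) is sound, though even unnecessary, since the argmax and argmin points themselves already witness both hard labels whenever the extrema disagree in sign.
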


\begin{figure*}[t]
\centering
\includegraphics[width = 0.45\linewidth]{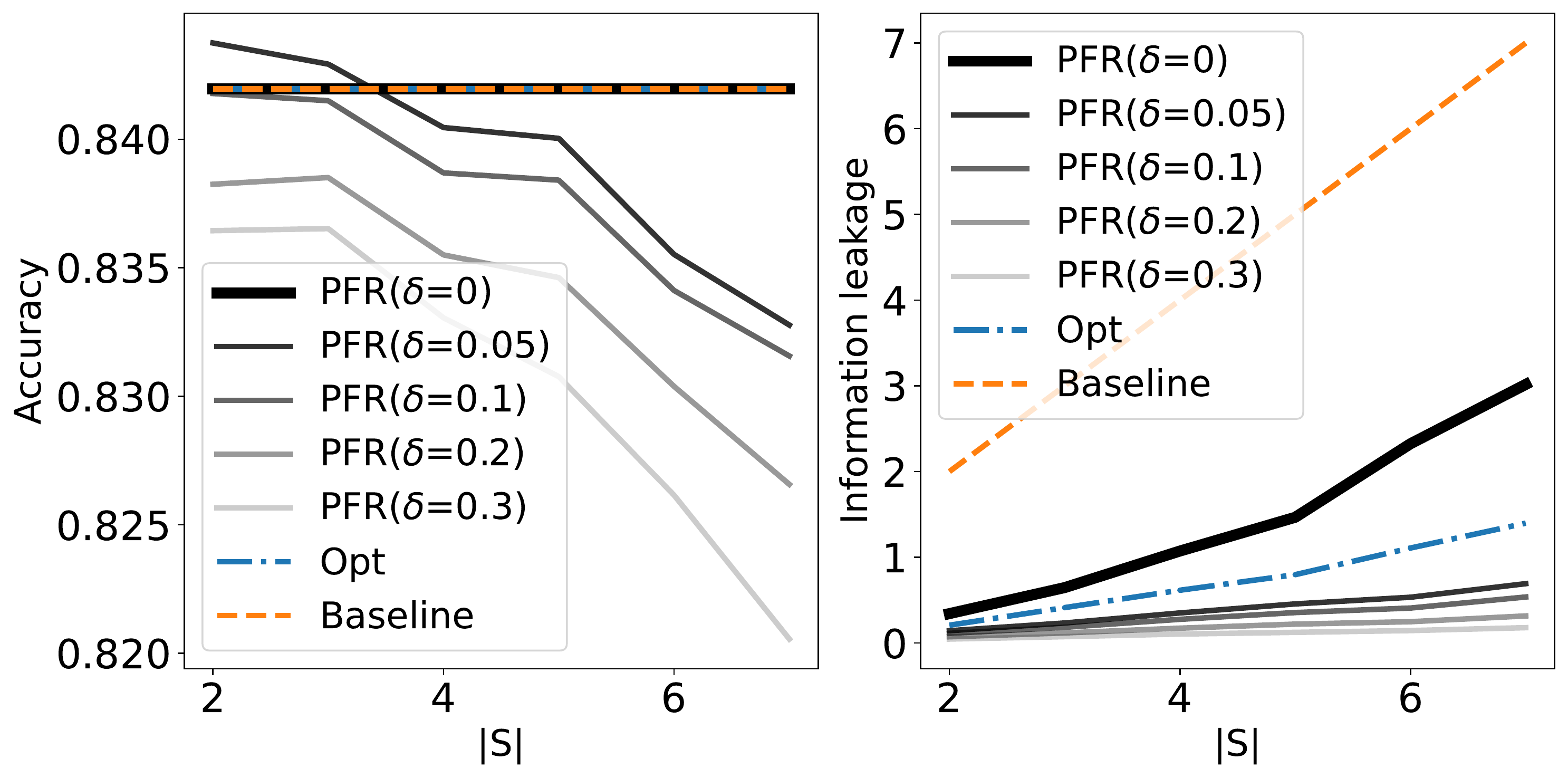}
\includegraphics[width = 0.45\linewidth]{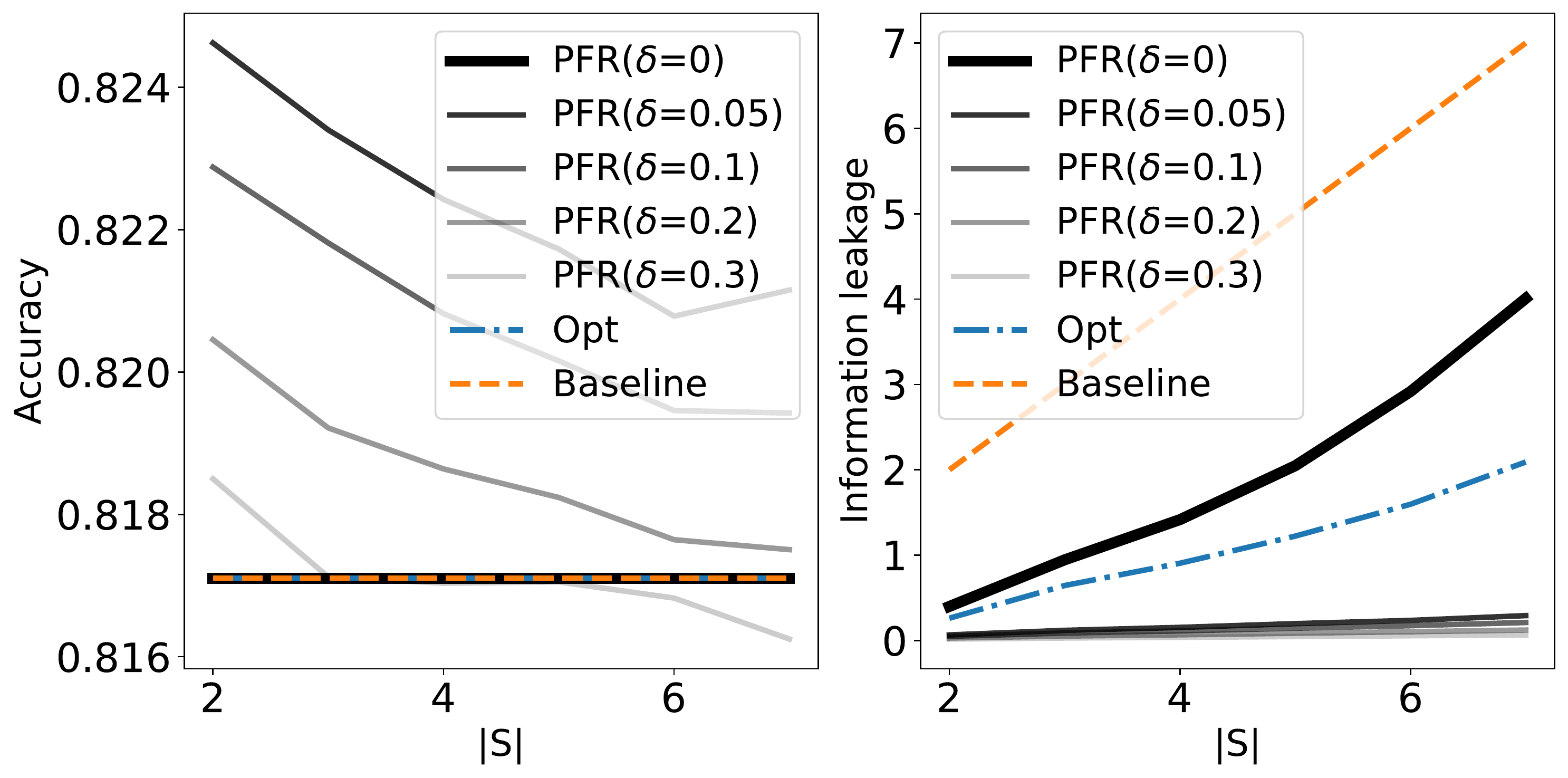}
\caption{Accuracy and redundant information leakage for different choices of number of sensitive features $|S|$ on Insurance (left) and Credit (right) datasets using a Logistic Regression classifier.}
\label{fig:linear}
\end{figure*}

\subsection{PFR-linear Algorithm and Evaluation}
\label{sec:PFRlinear_eval}

A pseudo-code of PFR specialized for linear classifiers is reported in 
Algorithm \ref{alg:alg1}. 
The algorithm takes as input a sample $x$ (which only exposes the set 
of public features $x_P$) and uses the training data $D$ to estimate 
the mean and covariance matrix needed to compute the conditional distribution
of the model predictions given the unrevealed features (lines 1 and 2), 
as discussed above. 
After initializing empty the set of revealed features to the (line 3)
it iteratively releases one feature until a core feature set 
(and its associated representative label) are determined (line 5), as
discussed in detail in Section \ref{sec:pure_core_set_linear}.
The released feature $X_{j^*}$ is the one, among the unrevealed features 
$U$, that maximizes the scoring function $F$ (line 13). Computing such 
a scoring function requires estimating the conditional distribution 
$\Pr(X_j | X_R = x_R)$ (line 9), constructing a sample set $\bm Z$ 
from such distribution (line 10), and approximating the distribution 
over soft model predictions through Monte Carlo sampling to compute (line 11).
Finally, the algorithm updates the set of the revealed and unrevealed 
features (line 14). 

Notice that PFR relies on estimating the mean vector and covariance matrix 
from the training data, which is considered public, for the scope of this paper. 
If the training data is private, various techniques exist to release DP mean, 
and variance \cite{liu2021robust,amin2019differentially} and can be readily adopted. 
However, the protection of training data is beyond the scope of this work.

\textbf{Evaluation.} Next, this section evaluates the effectiveness of PFR in minimizing information leakage. The experiments are conducted on six standard UCI datasets \cite{UCIdatasets}. 
We discuss here a selection of these results and refer the reader to the Appendix for additional 
experiments.

Figure \ref{fig:hist_linear} reports the snapshot on the redundant data
leakage subject by various users on a Logistic regression classifier
trained on the Bank dataset \cite{UCIdatasets} (more details reported 
in the Appendix), when using the proposed PFR algorithm for various 
core feature set failure probability $\delta$ levels.
The benefits of PFR are clearly evident from this histogram. For each 
testing sample, PFR finds core feature sets that are much smaller than the overall 
sensitive feature set size $|S|=7$. Additionally, notice that when $\delta >0$, it finds 
core features sets of size smaller than 2 for the vast majority of the individuals. 
{\em This suggests that a significant number of users would need to disclose 
only a small fraction of all of their sensitive information to allow 
the model to make accurate predictions either with complete certainty 
or with very high confidence.}






To further illustrate the advantages of PFR, we compare it to a \emph{baseline} and an \emph{optimal} model for various choices of the number of sensitive attributes $|S| \in [2,7]$. 
The \emph{baseline}, in this context, refers to the use of the original classifier, which requires users to disclose all sensitive features.
The \emph{optimal} model refers to the process of using a brute force method to identify the minimum core feature set and its representative label by evaluating all possible subsets of sensitive features. Once identified, this representative label is used as the model prediction when not all sensitive features are disclosed. Verification tests are used to determine if a subset is a core feature set. It is important to note that this method is not only computationally inefficient due to the exponential number of cases, but also infeasible to implement in practice as it assumes that all sensitive features are known.

For each choice of $|S|$, we randomly select $|S|$ features from the entire set of features and designate them as sensitive attributes. The remaining features are considered as public attributes. The average accuracy and information leakage are then reported based on 100 random selections of the sensitive attributes. Additional details on the experimental settings can be found in Appendix Section \ref{sec:app_exp}. 

The performance results in terms of accuracy (left subplots) and information leakage (right subplots) are presented in Figure \ref{fig:linear}. It is observed that across all datasets, PFR with $\delta =0$ are able to identify a pure core feature set that is much smaller than the set of sensitive features. As a result, only a small percentage of sensitive features need to be disclosed by users, while maintaining the same level of accuracy. Furthermore, PFR with $\delta=0$, identifying pure core feature sets, can retain the same accuracy as the Baseline models. This implies that under linear models, privacy (as defined in this paper) can be achieved ``for free''!. Additionally, 
notice that how $\delta$ increases, fewer features need to be revealed by users, but at the cost of a decrease in accuracy, generally. Notice also that there may be cases (e.g., right subplots) where such features do not correlate well with the predictions, and not revealing them may thus even improve the prediction accuracy (this aspect is related to feature selection). Generally, however, the larger the failure probability $\delta$ the more information leakage can be protected but at a cost of a larger drop in accuracy. At the same time, notice how marginal is the decrease in accuracy, which demonstrates the robustness of the proposed model.

\section{PFR for non-linear classifiers}
\label{sec:PFR_nonlinear}

Next, the paper focuses on computing the estimate $\Pr(\tilde{f}_\theta(X_U , X_R = x_R))$ 
and determining core feature sets when $f_\theta$ is a nonlinear classifier. 
Then, the section presents results that illustrate the practical benefits of 
PFR in minimizing information leakage on neural networks. 
The determination of core feature sets relies on the assumption that the classifiers are $\Delta$-robust, i.e., $\forall x, x' \in \cX, \mbox{s.t:} \ \| x-x'\|_{\infty} \leq \Delta$ then $f_{\theta}(x) = f_{\theta}(x')$. In practice, however, we show that, even in the presence of arbitrary classifiers, the proposed PFR is able to significantly reduce information leakage at test time.

\begin{figure*}[t]
\centering
\includegraphics[width = 0.45\linewidth]{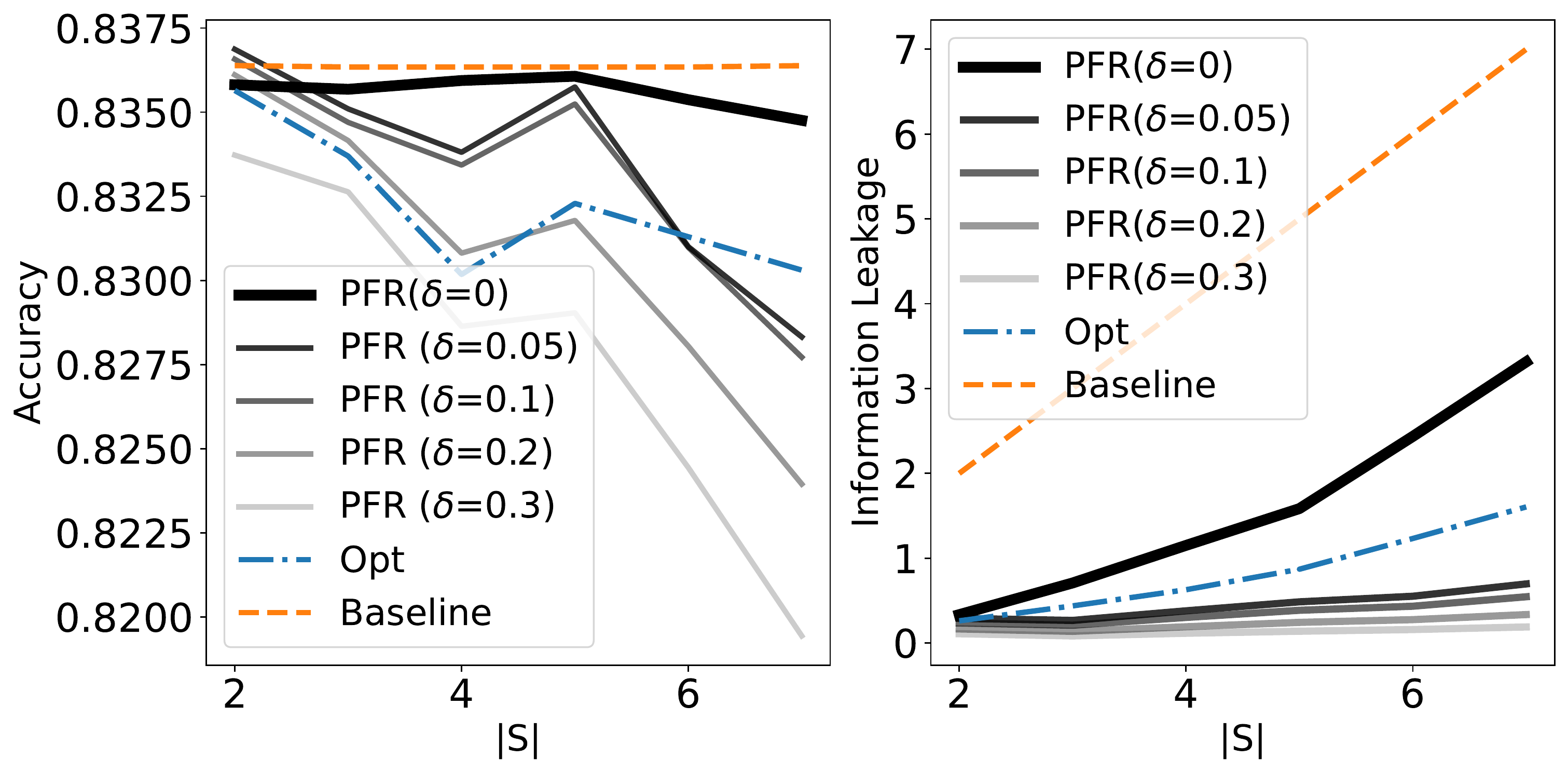}
\includegraphics[width = 0.45\linewidth]{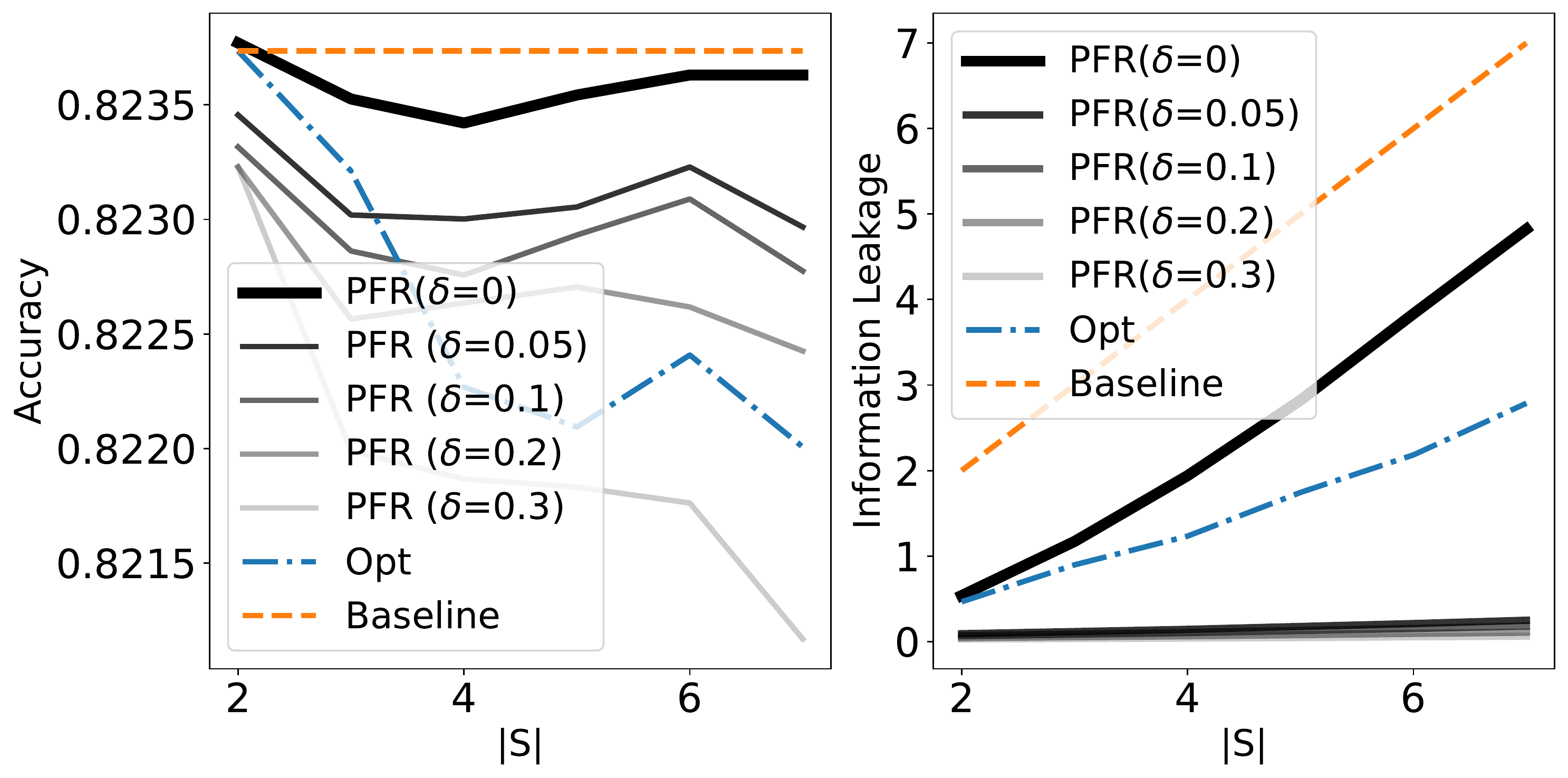}
\caption{Accuracy and redundant information leakage for different choices of number of sensitive features $|S|$ on Insurance (left) and Credit (right) datasets using a nonlinear (neural network) classifier.}
\label{fig:compare_nonlinear}
\end{figure*}

\subsection{Efficiently estimating $Pr(\tilde{f}_{\theta}(X_U, X_R =x_R))$}
First notice that even if the input features $x$ are jointly Gaussian, the outputs $f_\theta(x)$ of the classifier will no longer follow a Gaussian distribution after undergoing a non-linear transform. This makes estimating the distribution of $\Pr(\tilde{f}_{\theta}(X_U, X_R =x_R)$ more challenging. To address this issue, the paper proposes to locally approximate the model predictions $\tilde{f}\theta$ using a Gaussian distribution. This approach is demonstrated in the following result.
\begin{theorem}
\label{thm:taylor_approx}
The distribution of the  random variable $\tilde{f}_\theta = \tilde{f}_{\theta}(X_U, X_R =x_R)$ where $X_U \sim \mathcal{N}\big(\mu^{\mbox{pos}}_{U}, \Sigma^{\mbox{pos}}_{U} \big)$ can be  approximated by a Normal distribution as
\begin{align}
    \tilde{f}_\theta \sim \mathcal{N} \big( \tilde{f}_{\theta}( X_U = \mu^{pos}_U, X_R = x_R) , g^\top_{U}\Sigma^{pos}_{U}g_U\big) 
    \label{eq:taylor_arppox}
\end{align}
where $g_{U} = \nabla_{X_U}\tilde{f}_{\theta}(  X_U = \mu^{pos}_{U}, X_R = x_R)$ is the gradient of model prediction at $X_U = \mu^{pos}_{U}$. 
\end{theorem}
In the above, the mean vector $\mu^{\mbox{pos}}_{U}$ and and covariance matrix $\Sigma^{\mbox{pos}}_{U}$ of $Pr(X_U |X_R =x_R)$ are obtained from Proposition \ref{prop:2}. The result above relies on a first-order  Taylor approximation of the classifier $f_\theta$ around its mean.


\subsection{Testing pure core feature sets}
To determine if a subset $U$ of the sensitive features $S$ is a pure core feature set, we consider a set of $(\frac{1}{\Delta})^{|U|}$ input points, represented by $Q = {[X_U, x_R] }$. The entries corresponding to the revealed features are fixed with the value $x_R$, while the entries corresponding to the unrevealed features are evenly spaced over the cube $[-1, 1]^{|U|}$. The test verifies if the model predictions $f_{\theta}(x)$ remain constant for all $x$ in $Q$. 
Note that the computational runtime of this verification process is affected by the degree of robustness $\Delta$ of the underlying classifier $f$. Rendering such a procedure more generally computationally efficient will be an interesting direction for future work. In the next section, we will show that even considering arbitrary classifiers (e.g., we use standard neural networks), PFR can reduce information leakage dramatically when compared to standard approaches. 

\subsection{PFR-nonlinear Algorithm and Evaluation}
The FPR algorithm for non-linear classifiers differs from Algorithm \ref{alg:alg1} only in the method of calculating the estimates for the distribution of the soft model predictions, represented by $\Pr(f_\theta(X_j = z, X_{U\setminus {j}}, X_R=x_R))$ (line 11), by utilizing the results in Theorem \ref{thm:taylor_approx} and Proposition \ref{prop:4}. Additionally, the algorithm's termination test relies on the discussion presented in the previous section. A complete description of the algorithm is reported in Appendix \ref{app:pseudocodes}.

\textbf{Evaluation.}
Next, we assess the performance of PFR in reducing information leakage when standard non-linear classifiers are adopted. Specifically, we use a neural network with two hidden layers and ReLU activation functions as baselines classifiers and train models using stochastic gradient descent (as specified in more detail in Appendix \ref{sec:app_exp}). The evaluation, baselines, and benchmarks adopted follow the same settings as those adopted in Section \ref{sec:PFRlinear_eval}.
 
Figure \ref{fig:compare_nonlinear} illustrates the results in terms of accuracy (left subplots) and information leakage (right subplots). 
Unlike linear classifiers, non-linear models using PFR with a failure probability $\delta=0$ cannot ensure the same level of accuracy as the baseline models. However, notice how small this difference in accuracy is. Remarkably, a failure probability $\delta=0$ allows users to release less than a half and up to 90\% less sensitive features across different datasets while obtaining accuracies comparable to those of traditional classifiers. 
Notice also that when more relaxed failure probabilities are considered the information leakage reduces significantly. For example, when $\delta = 0.05$,  users require to release only 5\% of their sensitive features while retaining comparable accuracies to the baseline models (the largest accuracy difference reported was 0.005\%). 
{\em These results are significant: They demonstrate that the introduced privacy leakage notion and the proposed algorithm can become an important tool to safeguard the privacy of individual's data at test time, without excessively compromising accuracy}.

\section{Conclusion}
This paper introduced the concept of information leakage at test time whose goal is to minimize the number of features that individuals need to disclose during model inference while maintaining accurate predictions from the model. The motivations of this notion are grounded in the privacy risks imposed by the adoption of learning models in consequential domains, by the significant efforts required by organizations to verify the accuracy of the released information, and align with the data minimization principle outlined in the GDPR. 
The paper then discusses an iterative and personalized algorithm that selects the features each individual should release with the goal of minimizing information leakage while retaining exact (in the case of linear classifiers) or high (for non-linear classifiers) accuracy. 
Experiments over a range of benchmarks and datasets indicate that individuals may be able to release as little as 10\% of their information without compromising the accuracy of the model, providing a strong argument for the effectiveness of this approach in protecting privacy while preserving the accuracy of the model.

\section*{Acknowledgements}
This research is partially supported by NSF grant 2133169, NSF CAREER 
Award 2143706. Fioretto is also supported by a Google Research Scholar 
Award and an Amazon Research Award. Its views and conclusions are those 
of the authors only.



\bibliography{lib.bib}
\bibliographystyle{icml2022}

\newpage
\appendix
\onecolumn
\setcounter{theorem}{0}
\setcounter{corollary}{0}
\setcounter{lemma}{0}
\setcounter{proposition}{0}

\section{Missing proofs}
\label{app:proofs}

\begin{proposition}
    \label{app:thm:delta_vs_entropy}
    Given a core feature set $R \subseteq S$ with failure probability $\delta <0.5$, then there exists a function $\epsilon:\mathbb{R}\to\mathbb{R}$ that is  monotonic decreasing function with $\epsilon(0)=0$ such that: 
  \[
      H\big[ f_\theta( X_U, X_R = x_R ) \big] \leq \epsilon(\delta),
  \]
where $H[Z] \text{=} -\sum_{z \in [L]}\Pr(Z=z) \log \Pr(Z=z)$ 
is the entropy of the random variable $Z$. 
\end{proposition}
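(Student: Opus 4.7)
The plan is to invoke the definition of a core feature set to get a pointwise lower bound on the probability of the representative label, and then to apply a standard maximum-entropy argument to bound $H[f_\theta(X_U, X_R = x_R)]$ from above. This proof should be short and requires neither the Gaussian assumption nor linearity of $f_\theta$, only Definition \ref{def:core_set} together with concavity of the entropy.

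Concretely, I would first unwind the definition. Since $R$ is a core feature set with failure probability $\delta$, there exists a representative label $\tilde y$ such that
\[
    p := \Pr\bigl(f_\theta(X_U, X_R = x_R) = \tilde y\bigr) \geq 1 - \delta.
\]
Writing $q_y := \Pr(f_\theta(X_U, X_R = x_R) = y)$ for $y \neq \tilde y$, the constraint $\sum_{y \neq \tilde y} q_y = 1 - p \leq \delta$ holds and the entropy splits as
\[
    H\bigl[f_\theta(X_U, X_R = x_R)\bigr]
    = -p \log p - \sum_{y \neq \tilde y} q_y \log q_y.
\]
I would then upper-bound the right-hand side by maximizing over all probability vectors $(q_y)_{y \neq \tilde y}$ with total mass $1 - p$. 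By strict concavity of $t \mapsto -t \log t$ (either by Jensen or a Lagrange-multiplier calculation), the maximizer is the uniform split $q_y = (1-p)/(L-1)$, producing the envelope
\[
    \psi(p) := -p \log p - (1-p)\log \tfrac{1-p}{L-1}.
\]

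Next I would replace $p$ by its worst admissible value $1-\delta$. A quick derivative gives $\psi'(p) = \log \tfrac{1-p}{p(L-1)}$, which is negative whenever $p > 1/L$. Since $\delta < 1/2$ forces $p \geq 1 - \delta > 1/2 \geq 1/L$ (for $L \geq 2$), $\psi$ is decreasing at $p$, hence $\psi(p) \leq \psi(1 - \delta)$. I would then define
\[
    \epsilon(\delta) := \psi(1 - \delta) = -(1-\delta)\log(1-\delta) - \delta \log \tfrac{\delta}{L-1},
\]
with the convention $0 \log 0 = 0$. Then $\epsilon(0) = 0$, and because $\psi$ is decreasing in $p$ while $p = 1-\delta$ decreases with $\delta$, $\epsilon$ is monotone in $\delta$ on $[0, 1/2)$ as required (for $L = 2$ this is just the familiar binary entropy function $h_2(\delta)$, and the $\delta < 1/2$ condition corresponds to staying on the monotone branch).

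The only delicate step is the max-entropy computation: one must justify that, under the single linear constraint $\sum_{y \neq \tilde y} q_y = 1 - p$ with no further coupling, the uniform split truly maximizes $-\sum q_y \log q_y$. This is the one place where care is warranted, but it is a standard Lagrangian/Jensen argument, so the main "obstacle" is essentially bookkeeping rather than anything deep. Everything else is a one-line monotonicity check on $\psi$.
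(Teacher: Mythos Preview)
Your proposal is correct and follows the same underlying idea as the paper: use Definition~\ref{def:core_set} to pin down $p \geq 1-\delta$, bound the entropy by the worst-case envelope, and read off the required monotonicity. The only genuine difference is scope: the paper's proof treats the binary case $L=2$ directly (so the envelope is just the binary entropy $h_2$) and defers the multiclass case to ``a similar process,'' whereas you handle general $L$ in one pass via the max-entropy argument over the $L-1$ non-representative labels. Your route is thus a strict generalisation of the paper's, at the mild cost of the extra Jensen/Lagrangian step you flag; for $L=2$ the two arguments coincide verbatim. (As a side remark, both your argument and the paper's actually show that $\epsilon$ is monotone \emph{increasing} in $\delta$ with $\epsilon(0)=0$; the word ``decreasing'' in the statement appears to be a slip, and the paper's own proof exhibits the same sign by working in the variable $Z=1-\delta$.)
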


\begin{proof}
In this proof, we demonstrate the binary classification case. The extension to a multi-class scenario can be achieved through a similar process.

By the definition of the core feature set, there exists a representative label, denoted as $\tilde{y} \in \{0,1\}$ such that the probability of $P(f_{\theta}(X_U, X_R =x_R) = \tilde{y})$ is greater than or equal to $1 -\delta $. 
Without loss of generality, we assume that the representative label is $\tilde{y} =1$. 
Therefore, if we denote $Z$ as the probability of $Pr( f_{\theta}(X_U, X_R =x_R) =1) $, then the probability of $Pr(f_{\theta}(X_U, X_R =x_R) =0) = 1-Z$. 
Additionally, we have $Z \geq 1-\delta > 0.5$ due to the assumption that $\delta <0.5$. The entropy of the model's prediction can be represented as: 
$H\big[ f_\theta( X_U, X_R = x_R ) \big]  = -Z \log Z - (1-Z) \log (1-Z) $.

Let $\epsilon(Z) = -Z \log Z - (1-Z) \log (1-Z) $. 
The derivative of $\epsilon(Z)$ is given by $ \frac{d\epsilon(Z)}{dZ} = \log \frac{1-Z}{Z} <0 $, as $Z > 0.5$. As a result, $\epsilon(Z)$ is a monotonically decreasing function.

When $\delta =0$, we have $Z=1$, and by the property of the entropy $ H\big[ f_\theta( X_U, X_R = x_R ) \big]  = 0$. 
\end{proof}

\begin{proposition}
    \label{thm:cond_entropy}
    Given two subsets $R$ and $R'$ of sensitive features $S$, with 
    $R \subseteq R'$, 
    \[
      H\big( f_\theta(X_U, X_R=x_R) \big) \geq 
      H\big( f_\theta(X_{U'}, X_{R'}=x_{R'}) \big),
    \]
    where $U=S\setminus R$ and $U'=S\setminus R'$.
\end{proposition}

\begin{proof}
This is due to the property that conditioning reduces the  uncertainty, or the well-known \emph{information never hurts}  theorem in information theory \cite{krause2005note}.
\end{proof}

\begin{proposition}
\label{app:prop:2} The conditional distribution of any subset of unrevealed features 
$U' \in U$, given the the values of released features $X_R =x_R$ is given by:
\begin{align*}
\Pr(X_{U'} | X_R = x_R)  &= 
    \mathcal{N}\bigg(\mu_{U'}  + \Sigma_{U', R} \Sigma^{-1}_{R, R} 
        (x_R - \mu_R),\;
        \Sigma_{U',U'} - \Sigma_{U',R}\Sigma^{-1}_{R,R} \Sigma_{R,U'} \bigg),
\end{align*}
where $\Sigma$ is the covariance matrix 
 \end{proposition}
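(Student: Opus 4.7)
The statement is the classical formula for the conditional distribution of one block of a jointly Gaussian random vector given another block, so the plan is to invoke standard multivariate Gaussian machinery rather than to invent new tools. Two standard routes exist: (i) directly manipulate the joint density $p(x_{U'}, x_R)$ via the block-matrix (Schur complement) inversion of $\Sigma$ and read off the conditional density, or (ii) exhibit an affine decomposition of $X_{U'}$ that separates an $X_R$-measurable part from a part independent of $X_R$. I would pursue route (ii), since it avoids computing a block inverse by hand and the independence step lets the conditional mean and covariance fall out almost immediately.

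Concretely, first I would set up the joint distribution $(X_{U'}, X_R) \sim \mathcal{N}(\mu_{U' \cup R}, \Sigma_{U' \cup R, U' \cup R})$, which is a marginal of the joint Gaussian over all features and is therefore itself Gaussian. Then I would define the auxiliary random vector
\begin{equation*}
    Y \;=\; X_{U'} \;-\; \Sigma_{U',R}\, \Sigma_{R,R}^{-1}\, X_R,
\end{equation*}
and observe that $(Y, X_R)$ is a linear transformation of $(X_{U'}, X_R)$, hence jointly Gaussian. The key computation is
\begin{equation*}
    \mathrm{Cov}(Y, X_R) \;=\; \Sigma_{U',R} \;-\; \Sigma_{U',R}\, \Sigma_{R,R}^{-1}\, \Sigma_{R,R} \;=\; 0,
\end{equation*}
so by the jointly-Gaussian-and-uncorrelated-implies-independent fact, $Y \perp\!\!\!\perp X_R$. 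A routine calculation gives $\mathbb{E}[Y] = \mu_{U'} - \Sigma_{U',R}\Sigma_{R,R}^{-1}\mu_R$ and $\mathrm{Var}(Y) = \Sigma_{U',U'} - \Sigma_{U',R}\Sigma_{R,R}^{-1}\Sigma_{R,U'}$.

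Finally, I would write $X_{U'} = Y + \Sigma_{U',R}\Sigma_{R,R}^{-1} X_R$ and condition on $X_R = x_R$. Because $Y$ is independent of $X_R$, the conditional distribution equals the distribution of $Y + \Sigma_{U',R}\Sigma_{R,R}^{-1} x_R$, which is Gaussian with mean $\mu_{U'} + \Sigma_{U',R}\Sigma_{R,R}^{-1}(x_R - \mu_R)$ and covariance $\Sigma_{U',U'} - \Sigma_{U',R}\Sigma_{R,R}^{-1}\Sigma_{R,U'}$, matching the claim. There is no real obstacle here; the only subtlety is to be careful that $U' \subseteq U \subseteq S$ is disjoint from $R$, so that the submatrices $\Sigma_{U',R}$, $\Sigma_{R,U'}$, $\Sigma_{U',U'}$, $\Sigma_{R,R}$ are well-defined and $\Sigma_{R,R}$ is invertible (which follows from $\Sigma$ being positive definite, an implicit assumption of the Gaussian modeling used throughout the paper).
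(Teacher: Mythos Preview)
Your argument is correct and complete: the affine decomposition $Y = X_{U'} - \Sigma_{U',R}\Sigma_{R,R}^{-1}X_R$, the verification that $\mathrm{Cov}(Y,X_R)=0$, and the consequent independence under joint Gaussianity give the conditional mean and covariance exactly as stated. The paper, however, does not prove this proposition at all; it simply records that this is a well-known property of the multivariate Gaussian and refers the reader to Chapter~2.3.2 of Bishop's textbook. So your route is not so much different from the paper's as it is strictly more informative: you supply a self-contained derivation where the paper is content with a citation. The only minor remark is that your caveat about $\Sigma_{R,R}$ being invertible is well placed, since the paper leaves positive definiteness of $\Sigma$ implicit.
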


 \begin{proof}
 This is a well-known property of the Gaussian distribution and we refer the reader to Chapter 2.3.2 of the textbook \cite{bishop2006pattern} for further details. 
 \end{proof}


\begin{proposition}
\label{app:prop:3}
The model predictions before thresholding, $\tilde{f}_{\theta}(X_U, X_R = x_R) =\theta_U X_U + \theta_R x_R $ 
is a random variable with a Gaussian distribution $\mathcal{N}\big(m_f, \sigma_f \big)$, 
where
\begin{align}
    m_f &= \theta_R x_R + \theta^\top_U \big(  \mu_U + \Sigma_{U, R} \Sigma^{-1}_{R,R} ( x_{R} - \mu_R) \big) \label{eq:app_prop4_a}\\ 
    \sigma^2_f &= \theta^\top_{U} \big( \Sigma_{U,U} - \Sigma_{U, R} \Sigma^{-1}_{R,R} \Sigma_{R,U} \big) \theta_{U}, \label{eq:app_prop4_b}
\end{align}
where $\theta_U$ is the sub-vector of parameters $\theta$ corresponding
to the unrevealed features $U$. 
 \end{proposition}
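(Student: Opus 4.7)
The plan is to reduce Proposition~\ref{app:prop:3} to the already-established conditional Gaussian formula in Proposition~\ref{app:prop:2} combined with the standard fact that affine transformations of Gaussian vectors are Gaussian. Since $x_R$ is fixed (it is a realization of the revealed features, not a random variable), the expression $\tilde{f}_\theta(X_U, X_R = x_R) = \theta_U^\top X_U + \theta_R^\top x_R$ is an affine function of the random vector $X_U$, with the only randomness coming from the conditional law of $X_U$ given $X_R = x_R$.

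First, I would invoke Proposition~\ref{app:prop:2} with $U' = U$ to conclude that, conditional on $X_R = x_R$, the unrevealed block $X_U$ is Gaussian:
\begin{equation*}
X_U \mid X_R = x_R \;\sim\; \mathcal{N}\bigl(\mu_U + \Sigma_{U,R}\Sigma^{-1}_{R,R}(x_R-\mu_R),\; \Sigma_{U,U} - \Sigma_{U,R}\Sigma^{-1}_{R,R}\Sigma_{R,U}\bigr).
\end{equation*}
Next, I would apply the standard affine-transform property of multivariate Gaussians: if $Y \sim \mathcal{N}(m, \Lambda)$ and $a$ is a deterministic vector and $c$ a deterministic scalar, then $a^\top Y + c \sim \mathcal{N}(a^\top m + c,\; a^\top \Lambda a)$. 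Setting $Y = X_U$, $a = \theta_U$, $c = \theta_R^\top x_R$, and $(m,\Lambda)$ equal to the conditional mean and covariance above, the mean of $\tilde f_\theta$ becomes
\begin{equation*}
\theta_U^\top\bigl(\mu_U + \Sigma_{U,R}\Sigma^{-1}_{R,R}(x_R-\mu_R)\bigr) + \theta_R^\top x_R,
\end{equation*}
which is exactly $m_f$ in Equation~\eqref{eq:app_prop4_a}, and the variance becomes $\theta_U^\top(\Sigma_{U,U} - \Sigma_{U,R}\Sigma^{-1}_{R,R}\Sigma_{R,U})\theta_U$, matching $\sigma_f^2$ in Equation~\eqref{eq:app_prop4_b}.

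There is no real obstacle here: the result is essentially a bookkeeping exercise combining a cited conditional-distribution fact (Proposition~\ref{app:prop:2}) with the affine-closure property of Gaussians. The only point requiring a brief justification is that $\theta_R^\top x_R$ can be treated as a deterministic shift because we condition on $X_R = x_R$, so this term contributes only to the mean and not to the variance. I would close by noting that the conditional covariance being positive semidefinite (a Schur complement of a PSD covariance) guarantees $\sigma_f^2 \geq 0$, so the stated Gaussian is well-defined.
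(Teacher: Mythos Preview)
Your proposal is correct and follows essentially the same approach as the paper: invoke Proposition~\ref{prop:2} to obtain the conditional Gaussian law of $X_U$ given $X_R=x_R$, then use the affine-closure property of Gaussians to read off the mean and variance of $\theta_U^\top X_U + \theta_R^\top x_R$. The paper's proof is terser---simply citing these two facts---but the substance is identical.
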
 

\begin{proof}
The proof of this statement is straightforward due to the property that a linear combination of Gaussian variables $X_U$ is also Gaussian. Additionally, the posterior distribution of $X_U$ is already provided in Proposition  \ref{prop:2}.
\end{proof}

\begin{proposition}
\label{app:prop:4}
Let the model predictions prior thresholding $\tilde{f}_\theta(X_U, X_R =x_R)$, 
be a random variable following a Gaussian distribution $\mathcal{N} (m_f, 
\sigma^2_f)$. Then, the model prediction following thresholding $f_{\theta}( X_U, X_R =x_R)$ 
is a random variable following a Bernoulli distribution $Bern(p)$ 
with $p = \Phi( \frac{m_f}{\sigma_f})$, where $\Phi(\cdot)$ refers to 
the  CDF of the standard normal distribution, and $m_f$ and $\sigma_f$,
are given in Equations \eqref{eq:prop4_a} and \eqref{eq:prop4_b}, respectively.
\end{proposition}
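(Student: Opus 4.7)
The plan is to verify the two claims separately: that the thresholded prediction is Bernoulli, and that its success probability has the claimed closed form. The first claim is immediate from the definition $f_\theta(x) = \mathbf{1}\{\tilde{f}_\theta(x) \geq 0\}$, since any $\{0,1\}$-valued random variable is Bernoulli; the only content is identifying the success parameter $p = \Pr(f_\theta(X_U, X_R = x_R) = 1)$.

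For the second claim, I would start by rewriting $p = \Pr(\tilde{f}_\theta(X_U, X_R = x_R) \geq 0)$ and then standardize, using the hypothesis that $\tilde{f}_\theta(X_U, X_R = x_R) \sim \mathcal{N}(m_f, \sigma_f^2)$. Setting $Z = (\tilde{f}_\theta - m_f)/\sigma_f \sim \mathcal{N}(0,1)$ (assuming $\sigma_f > 0$), the event $\{\tilde{f}_\theta \geq 0\}$ is equivalent to $\{Z \geq -m_f/\sigma_f\}$, so $p = 1 - \Phi(-m_f/\sigma_f)$, which by the symmetry $1 - \Phi(-t) = \Phi(t)$ of the standard normal CDF equals $\Phi(m_f/\sigma_f)$.

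The only subtlety is the degenerate case $\sigma_f = 0$, which occurs when $\theta_U^\top (\Sigma_{U,U} - \Sigma_{U,R}\Sigma_{R,R}^{-1}\Sigma_{R,U})\theta_U = 0$, meaning $\tilde{f}_\theta$ is almost surely equal to the constant $m_f$. In that case $p \in \{0,1\}$ depending on the sign of $m_f$, and the formula $\Phi(m_f/\sigma_f)$ should be interpreted as the limit $\Phi(\pm\infty)$, giving the correct value. I would briefly note this edge case and then conclude. Overall this is a short and essentially routine derivation — no real obstacle — and the main work has already been done in Proposition~\ref{app:prop:3}, which supplies the Gaussian parameters $m_f$ and $\sigma_f^2$.
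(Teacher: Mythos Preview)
Your proposal is correct and follows exactly the same approach as the paper: observe that $f_\theta = \mathbf{1}\{\tilde f_\theta \ge 0\}$ is Bernoulli, then compute $p = \Pr(\tilde f_\theta \ge 0)$ by standardizing the Gaussian. If anything, your write-up is more thorough than the paper's (which simply invokes ``properties of the normal distribution''), since you spell out the standardization and symmetry step and flag the degenerate $\sigma_f = 0$ case.
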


\begin{proof}
In the case of a binary classifier, we have $f_{\theta}(x) = \bm{1}\{ \tilde{f}_{\theta}(x) \geq 0 \}$. 
If $\tilde{f}$ follows a normal distribution, denoted as $\tilde{f} \sim \mathcal{N}(m_f, \sigma^2_f)$, then by the properties of the normal distribution, $ f{\theta} $ follows a Bernoulli distribution, denoted as $f_{\theta} \sim Bern(p)$, with parameter $p =\Phi( \frac{m_f}{\sigma_f})$, where $\Phi(\cdot)$ is the cumulative density function of the standard normal distribution.
\end{proof}

\begin{proposition}
\label{app:thm:core_linear_verf}
Assume $f_\theta$ is a linear classifier. Then, determining if a 
subset $U$ of sensitive features $S$ is a \emph{pure} core feature set 
can be performed in $O(|P| + |S|)$ time.
\end{proposition}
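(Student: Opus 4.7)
The plan is to leverage the closed-form expressions for the max and min of a linear function over the box $[-1,1]^{|U|}$ already recorded in Equation \eqref{eq:min_max_linear}, and then simply count the arithmetic operations involved in evaluating and comparing them.

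First I would unfold the shorthand and write the pre-threshold prediction as
\[
  \tilde{f}_{\theta}(X_U, X_R=x_R, X_P=x_P)
  = \theta_U^{\top} X_U + \theta_R^{\top} x_R + \theta_P^{\top} x_P,
\]
so that testing whether $U$ is a pure core feature set amounts (by Definition \ref{def:core_set} with $\delta=0$ and the sign characterization of $f_\theta$) to checking whether this affine function of $X_U$ retains a fixed sign over the cube $X_U \in [-1,1]^{|U|}$. As noted in the discussion preceding the proposition, this in turn reduces to checking whether the maximum and minimum of this function share a sign, and by Equation \eqref{eq:min_max_linear} (extended to include the public contribution $\theta_P^\top x_P$) these two extrema admit the closed forms
\[
  \pm \|\theta_U\|_1 + \theta_R^{\top} x_R + \theta_P^{\top} x_P.
\]

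Next I would account for the runtime. The constant term $c := \theta_R^{\top} x_R + \theta_P^{\top} x_P$ is an inner product on $|R| + |P|$ coordinates and thus costs $O(|R| + |P|)$ time; the quantity $\|\theta_U\|_1 = \sum_{i \in U} |\theta_i|$ costs $O(|U|)$ time. Since $R$ and $U$ partition $S$, the sum is $O(|P| + |S|)$. Finally, forming $c + \|\theta_U\|_1$ and $c - \|\theta_U\|_1$, reading off their signs, and returning the representative label $\tilde y = 1$ if both are nonnegative, $\tilde y = 0$ if both are negative, or declaring $U$ not a pure core feature set otherwise, is $O(1)$ additional work. The total is $O(|P| + |S|)$, as claimed.

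There is no real obstacle here: the only subtlety worth flagging is that one must verify the computations genuinely pay only for $|U|$ coordinates of $\theta$ (not the entire parameter vector), which is immediate because $\ell_1$ norms and inner products decompose coordinatewise; and that the correctness of the sign-test reduction relies on the affine function attaining every intermediate value by continuity, so nonnegativity of both extrema (respectively negativity) is equivalent to a fixed sign throughout the cube. Both points are standard and require no further machinery.
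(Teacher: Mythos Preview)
Your proof is correct and follows essentially the same approach as the paper: reduce the test to the sign comparison of the closed-form extrema in Equation~\eqref{eq:min_max_linear}, then count arithmetic operations. Your account is in fact more careful than the paper's terse proof---explicitly separating the public contribution $\theta_P^\top x_P$ and writing $\|\theta_U\|_1$ in place of the paper's $\|\theta\|_1$---but the underlying argument is identical.
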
 

\begin{proof}
As discussed in the main text, to test if a subset $U$ is a core feature set or not, we need to check  if the following two terms have the same sign (either negative or non-negative):
\begin{align}
\begin{split}
\max_{X_U} \,  & \theta^\top_U X_U + \theta^\top_R x_R = \|\theta\|_1 + \theta^\top_R x_R \\
\min_{X_U} \, & \theta^\top_U X_U + \theta^\top_R x_R = - \|\theta\|_1 + \theta^\top_R x_R.
\label{eq:min_max_linear}
\end{split}
\end{align}
These can be solved in time $O( |P| + |S|)$ due to the property of the linear equality above.
\end{proof}

\begin{theorem}
\label{thm:taylor_approx}
The distribution of the  random variable $\tilde{f}_\theta = \tilde{f}_{\theta}(X_U, X_R =x_R)$ where $X_U \sim \mathcal{N}\big(\mu^{\mbox{pos}}_{U}, \Sigma^{\mbox{pos}}_{U} \big)$ can be  approximated by a Normal distribution as
\begin{align}
    \tilde{f}_\theta \sim \mathcal{N} \big( \tilde{f}_{\theta}( X_U = \mu^{pos}_U, X_R = x_R) , g^\top_{U}\Sigma^{pos}_{U}g_U\big) 
    \label{eq:taylor_arppox}
\end{align}
where $g_{U} = \nabla_{X_U}\tilde{f}_{\theta}(  X_U = \mu^{pos}_{U}, X_R = x_R)$ is the gradient of model prediction at $X_U = \mu^{pos}_{U}$. 
\end{theorem}

\begin{proof}
The proof relies on the first Taylor approximation of classifier $\tilde{f}$ around its mean: 
\begin{align}
 \tilde{f}_{\theta}(X_U, X_R = x_R, ) &\approx \tilde{f}_{\theta}( X_U = \mu^{pos}_{U}, X_R = x_R)  + (X_U - \mu^{pos}_U)^T \nabla_{X_U}\tilde{f}_{\theta}(X_U = \mu^{pos}_U ,  X_R = x_R)
 \label{app:eq:taylor_approx}
\end{align}
Since $X_U \sim \mathcal{N}\big(\mu^{\mbox{pos}}_{U}, \Sigma^{\mbox{pos}}_{U} \big) $ hence $X_{U} -\mu^{\mbox{pos}}_{U} \sim  \mathcal{N}\big(\boldsymbol{0}, \Sigma^{\mbox{pos}}_{U} \big) $. By the properties of normal distribution, the right-hand side of Equation (\ref{app:eq:taylor_approx}) is a linear combination of Gaussian variables, and it is also Gaussian.
\end{proof}

\section{Algorithms Pseudocode}
\label{app:pseudocodes}
The pseudocode for PFR for non-linear classifiers is presented in Algorithm \ref{alg:alg2}. There are two main differences between this algorithm and the case of linear classifiers. Firstly, the procedure of pure core feature testing on line 5 takes exponential time with respect to $|U|$ instead of linear time as in the case of linear classifiers. Additionally, we use Theorem \ref{thm:taylor_approx} to estimate the distribution of the soft prediction as seen on line 11, as the exact distribution cannot be computed analytically as in the case of linear classifiers.

\begin{algorithm}[t]
  \caption{PFR for non-linear classifiers}
  \label{alg:alg2}
  \setcounter{AlgoLine}{0}
  \SetKwInOut{Input}{input}
  \SetKwInOut{Output}{output}
  \SetKwRepeat{Do}{do}{while}
  \Input{A test sample $x$; Training data $D$} 
  \Output{A core feature set $R$  and its representative label $\tilde y$}
  $\mu \gets \frac{1}{|D|} \sum_{(x,y) \in D} x$\\
  $\Sigma  \gets \frac{1}{|D|} \sum_{(x,y) \in D} (x -\mu) (x - \mu)^\top$\\ 
  Initialize $R = \emptyset$\\
  \While{True} {
    \eIf{$R$ is a core feature set with repr.~label $\tilde y$}{
      \Return{$(R, \tilde y)$}
    }{
    \ForEach{$j \in U$} {
       Compute $\Pr(X_j | X_R = x_R)$  (using Prop.~\ref{prop:2})\\
      $\bm Z \gets \text{sample}(\Pr(X_j |X_R = x_R))$ T times\\
       Compute $\Pr\left(f_{\theta}(X_j = z, X_{U\setminus \{j\}}X_R = x_R)\right)$  \, ( using Theorem \ref{thm:taylor_approx})\\
       Compute $F(X_j)$ (using Eq. \eqref{eq:monte_carlo})
       }
    }
     $j^* \gets \argmax_j F(X_j)$\\
     $R \gets R \cup \{j^*\}$\\
     $U \gets U \setminus \{j^*\}$\\
    }
\end{algorithm}

\section{Extension from binary to multiclass classification}
\label{app:multiclass}
In the main text, we provide the implementation of PFR  for binary classification problem.  In this section, we extend the method to the multiclass classification problem.
 
\subsection{Estimating $P(f_{\theta}(X_U, X_R =x_R))$}

In order to achieve our goals of determining if a subset is a core feature set for a given $\delta >0$, and computing the entropy in the scoring function, we need to estimate the distribution of $f_{\theta}(X_U, X_R =x_R)$. In this section, we first discuss the method of computing the distribution of $\tilde{f}{\theta}(X_U, X_R =x_R)$ for both linear and non-linear models. Once this is done, we then address the challenge of estimating the hard label distribution $P(f{\theta}(X_U, X_R =x_R))$.

It is important to note that, under the assumption that the input features $X$ are normally distributed with mean $\mu$ and covariance matrix $\Sigma$, the linear classifier $\tilde{f}_{\theta} = \theta^\top {x}$ will also have a multivariate normal distribution. Specifically, if $X_U \sim \mathcal{N}(\mu^{pos}_U, \Sigma^{pos}_U)$, then $\tilde{f}{\theta}(X_U, X_R =x_R) \sim \mathcal{N}(\theta^\top_R x_R + \theta^T_{U} \mu^{pos}_U, \theta^\top_U \Sigma \theta_U)$.

For non-linear classifiers, the output $f_{\theta}(X_U, X_R =x_R) $ is not a Gaussian distribution due to the non-linear transformation. To approximate it, we use Theorem \ref{thm:taylor_approx} which states that the non-linear function $\tilde{f}_{\theta}(X_U, X_R =x_R)$ can be approximated as a multivariate Gaussian distribution.

\paragraph{Challenges when estimating $P(f_{\theta}(X_U, X_R =x_R))$} 
For multi-class classification problems, the hard label $f_{\theta}(X_U, X_R =x_R)$ is obtained by selecting the class with the highest score, which is given by $\argmax_{i \in [L]} \tilde{f}^i_{\theta}(X_U, X_R =x_R)$. However, due to the non-analytical nature of the $\argmax$ function, even when $\tilde{f}{\theta}(X_U, X_R =x_R)$ follows a Gaussian distribution, the distribution of $f{\theta}(X_U, X_R =x_R)$ cannot be computed analytically. To estimate this distribution, we resort to Monte Carlo sampling. Specifically, we draw a number of samples from $P(f_{\theta}(X_U, X_R =x_R))$, and approximate the probability of each class as the proportion of samples that fall in that class.

We provide experiments of PFR for multi-class classification cases in Section \ref{sec:app_multi_class_exp}.

\section{Experiments details}
\label{sec:app_exp}

\paragraph{Datasets information}
To show the advantages of the suggested PFR technique for safeguarding feature-level privacy, we employ benchmark datasets in our experiments. These datasets include both binary and multi-class classification datasets. The following are examples of binary datasets that we use to evaluate the method:
\begin{enumerate}
    \item Bank dataset \cite{UCIdatasets}.  The objective of this task is to predict whether a customer will subscribe to a term deposit using data from various features, including but not limited to call duration and age. There are a total of 16 features available for this analysis.
    
    \item Adult income dataset \cite{UCIdatasets}. The goal of this task is to predict whether an individual earns more than \$50,000 annually. After preprocessing the data, there are a total of 40 features available for analysis, including but not limited to occupation, gender, race, and age.
    
    \item Credit card default dataset \cite{UCIdatasets}.  
    The objective of this task is to predict whether a customer will default on a loan. The data used for this analysis includes 22 different features, such as the customer's age, marital status, and payment history.
    
     \item Car insurance  dataset \cite{car_insurance}. 
     The task at hand is to predict whether a customer has filed a claim with their car insurance company. The dataset for this analysis is provided by the insurance company and includes 16 features related to the customer, such as their gender, driving experience, age, and credit score.
\end{enumerate}

Furthermore, we also evaluate our method on two additional multi-class classification datasets:
\begin{enumerate}
\item Customer segmentation dataset  \cite{customer}. 
The task at hand is to classify a customer into one of four distinct categories: A, B, C, and D. The dataset used for this task contains 9 different features, including profession, gender, and working experience, among others.

\item Children fetal health dataset \cite{fetal}. The task at hand is to classify the health of a fetus into one of three categories: normal, suspect, or pathological, using data from CTG (cardiotocography) recordings. The data includes approximately 21 different features, such as heart rate and the number of uterine contractions.
\end{enumerate}

\paragraph{Settings:} 
For each dataset, 70\% of the data will be used for training the classifiers, while the remaining 30\% will be used for testing. The number of sensitive features, denoted as $|S|$, will be chosen randomly from the set of all features, with $|S|$ ranging from 2 to 7. The remaining features will be considered as public. 100 repetition experiments will be performed for each choice of $|S|$, under different random seeds, and the results will be averaged. All methods that require Monte Carlo sampling will use 1000 random samples. The performance of different methods will be evaluated based on accuracy and information leakage. Two different classifiers will be considered.

\begin{enumerate}
    \item Linear classifiers: We use Logistic Regression as the base classifier.
    \item Nonlinear classifiers: The nonlinear classifiers used in this study consist of a neural network with two hidden layers, using the ReLU activation function. The number of nodes in each hidden layer is set to 10. The network is trained using stochastic gradient descent (SGD) with a batch size of 32 and a learning rate of 0.001 for 300 epochs. A value of $\Delta$ = 0.2 is used when testing the pure core feature set for nonlinear classifiers. 
\end{enumerate}

\paragraph{Baseline models.} 
We compare our proposed algorithms with the following baseline models: 
\begin{enumerate}
\item \textbf{Baseline}: This refers to the usage of original classifier which asks users to reveal \textbf{all} sensitive features. 
\item \textbf{Opt}: This method involves evaluating all possible subsets of sensitive features in order to identify the minimum \emph{pure} core feature set. For each subset, the verification algorithm is used to determine whether it is a pure core feature set. The minimum pure core feature set that is found is then selected. It should be noted that as all possible subsets are evaluated, all sensitive feature values must be revealed. Therefore, this approach is not practical in real-world scenarios. However, it does provide a lower bound on information leakage for PFR (when $\delta=0$).
\end{enumerate}

\paragraph{Metrics.} We compare all different algorithms in terms of accuracy and information leakage:
\begin{enumerate}
\item Accuracy. For algorithms that are based on the core feature set, such as our PFR and "Opt," the representative label is used as the model's prediction. The accuracy is then determined by comparing this label to the ground truth.

\item Information leakage. We compute the average number of sensitive features that need to be revealed over the test set. A smaller number is considered better. 
\end{enumerate}

\begin{figure*}[tb]
\captionsetup{justification=centering}
\centering
\begin{subfigure}[b]{0.43\textwidth}
\includegraphics[width = 1.0\linewidth]{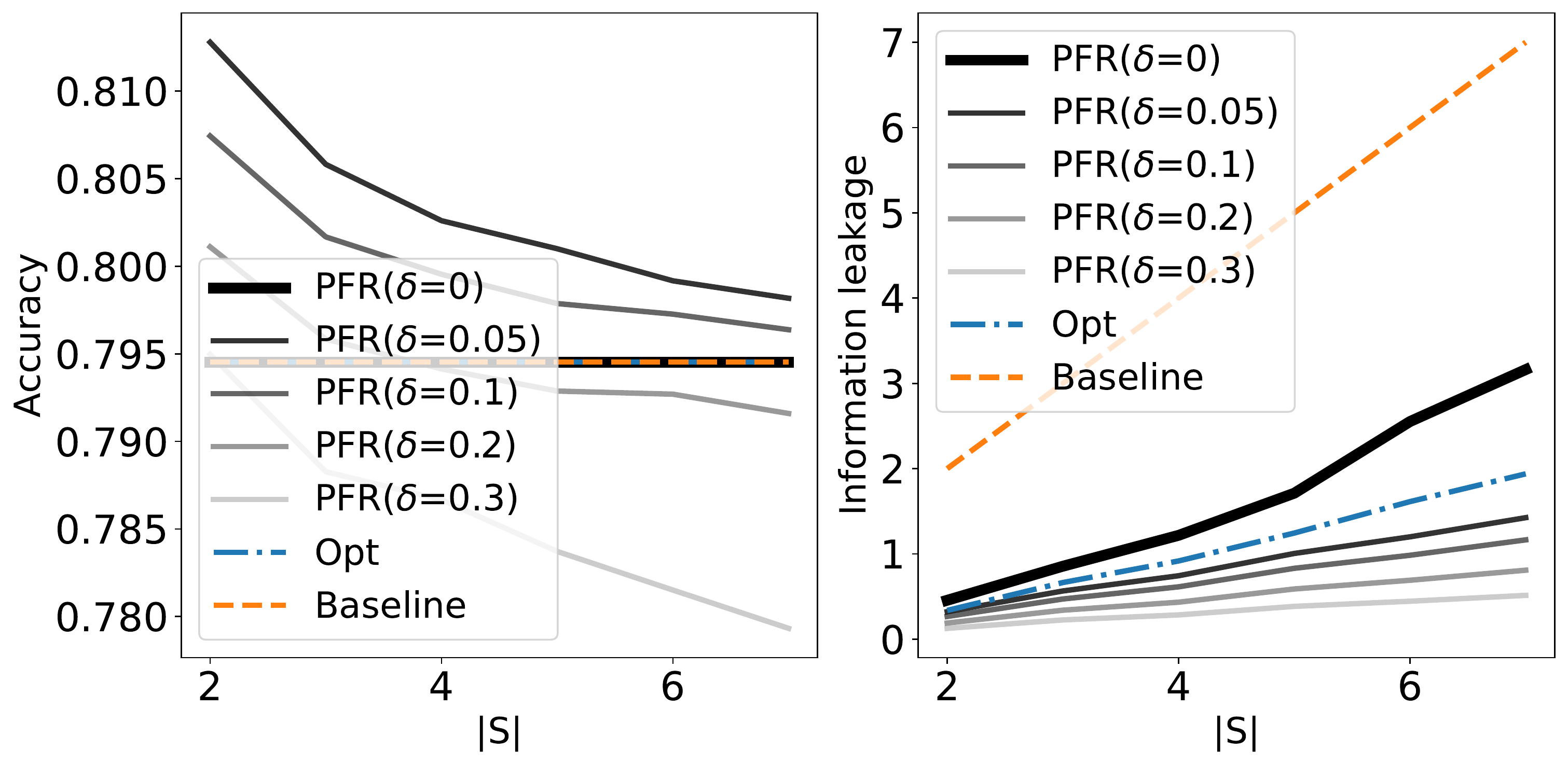}
\caption{Bank dataset}
\end{subfigure}
\begin{subfigure}[b]{0.43\textwidth}
\includegraphics[width = 1.0\linewidth]{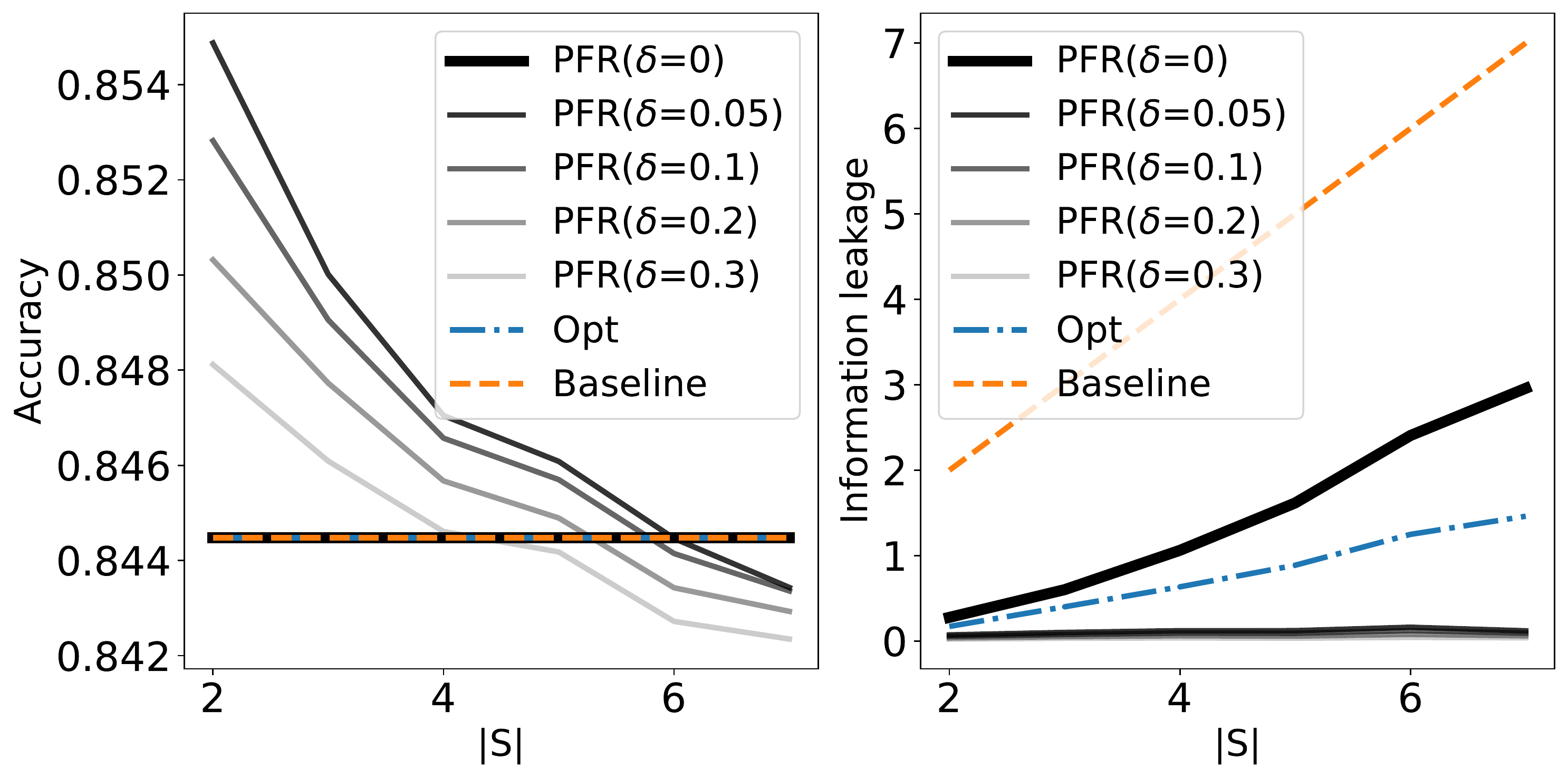}
\caption{Adult income dataset}
\end{subfigure}
\caption{Accuracy and information leakage for different choices of number of private features $m$ under Logistic Regression classifiers}
\label{fig:compare_linear_app}
\end{figure*}

\begin{figure*}[tb]
\captionsetup{justification=centering}
\centering
\begin{subfigure}[b]{0.43\textwidth}
\includegraphics[width = 1.0\linewidth]{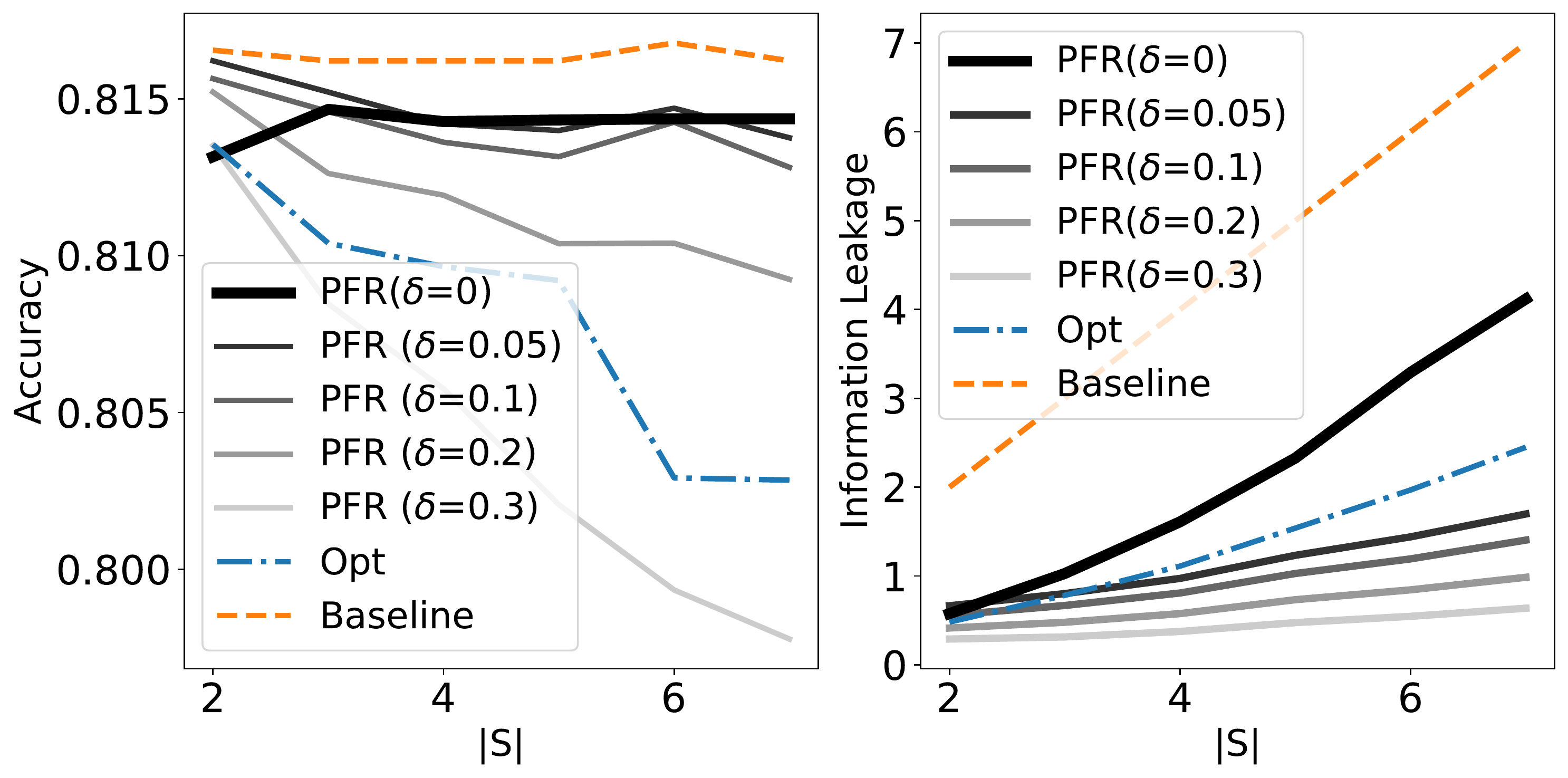}
\caption{Bank dataset}
\end{subfigure}
\begin{subfigure}[b]{0.43\textwidth}
\includegraphics[width = 1.0\linewidth]{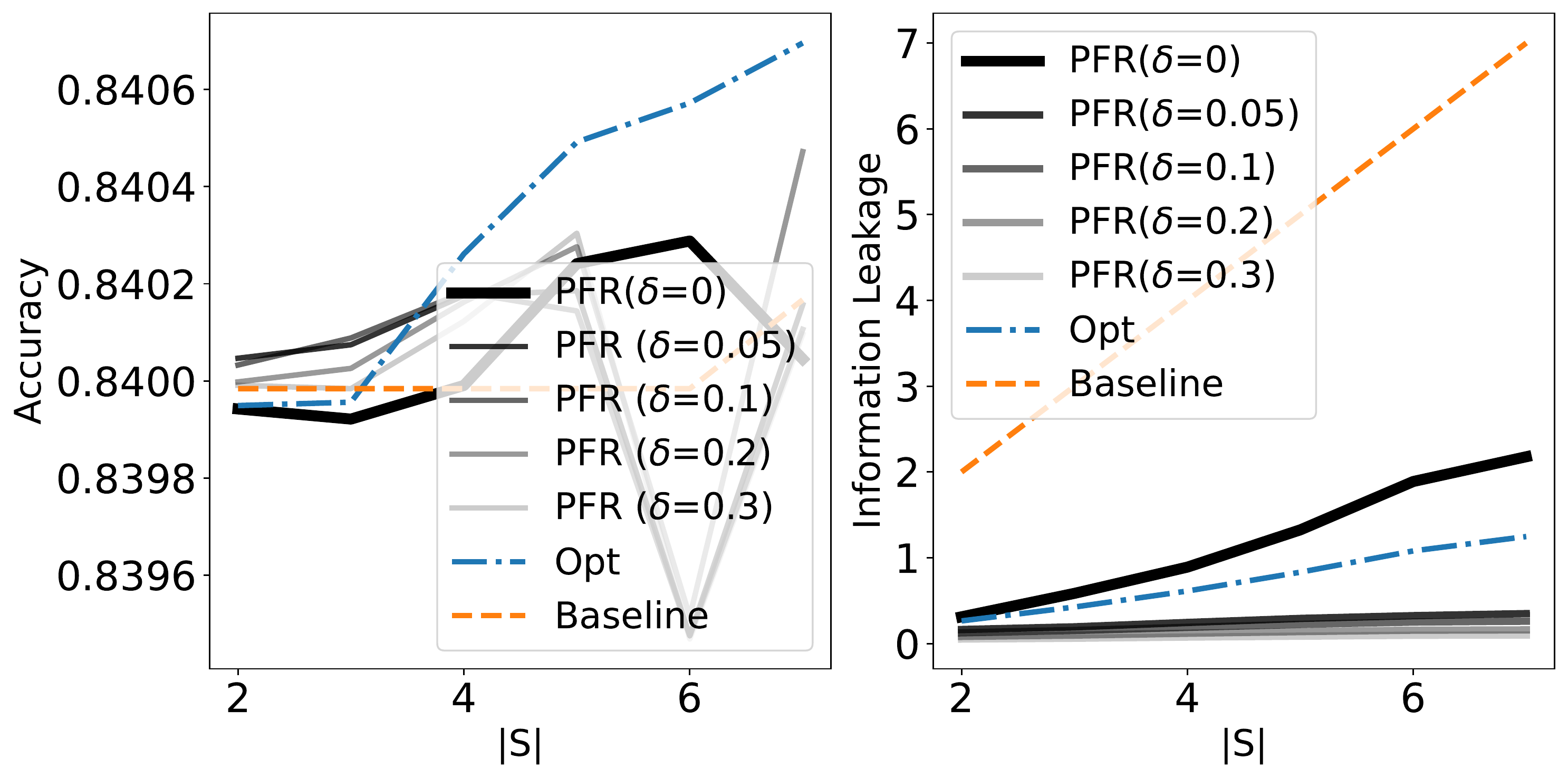}
\caption{Adult income dataset}
\end{subfigure}
\caption{Accuracy and information leakage for different choices of number of sensitive features $|S|$ under non-linear classifiers}
\label{fig:compare_nonlinear_app}
\end{figure*}

\subsection{Additional experiments on linear binary classifiers}
Additional experiments were conducted to compare the performance of PFR to that of the baseline methods using linear classifiers on the Bank and Adult income datasets, as shown in Figure \ref{fig:compare_linear_app}. As in the main text, a consistent trend in terms of performance is observed. As the number of sensitive attributes, $|S|$, increases, the information leakage introduced by PFR with various values of $\delta$ increases at a slower rate. With different choices of $|S|$, PFR (with $\delta = 0$) requires the revelation of at most 50\% of sensitive information. To significantly reduce the information leakage of PFR, the value of $\delta$ can be relaxed. By choosing an appropriate value for the failure probability, such as $\delta = 0.1$, only minimal accuracy is sacrificed (at most 0.002\%), while the information leakage can be reduced to as low as 5\% of the total number of sensitive attributes.

\subsection{Additional experiments on non-linear binary classifiers}

Additional experiments were conducted to compare the performance of PFR to that of the baseline methods using non-linear classifiers on the Bank and Adult income datasets, as shown in Figure \ref{fig:compare_nonlinear_app}. As seen, while the Baseline method requires the revelation of all sensitive attributes, PFR with different values of $\delta$ only requires the revelation of a much smaller number of sensitive attributes. The accuracy difference between the Baseline method and PFR is also minimal (at most 2\%). These results demonstrate the effectiveness of PFR in protecting privacy while maintaining a good prediction performance for test data.

\subsection{Evaluation of PFR on multi-class classifiers}
\label{sec:app_multi_class_exp}
\paragraph{Linear classifiers}
We also provide a comparison of accuracy and information leakage between our proposed FPR and the baseline models for linear classifiers. These metrics are reported for the Customer and Children Fetal Health datasets in Figure \ref{fig:multi_class_linear}. The figure clearly shows the benefits of FPR in reducing information leakage while maintaining a comparable accuracy to the baseline models.

\begin{figure*}[tb]
\captionsetup{justification=centering}
\centering
\begin{subfigure}[b]{0.43\textwidth}
\includegraphics[width = 1.0\linewidth]{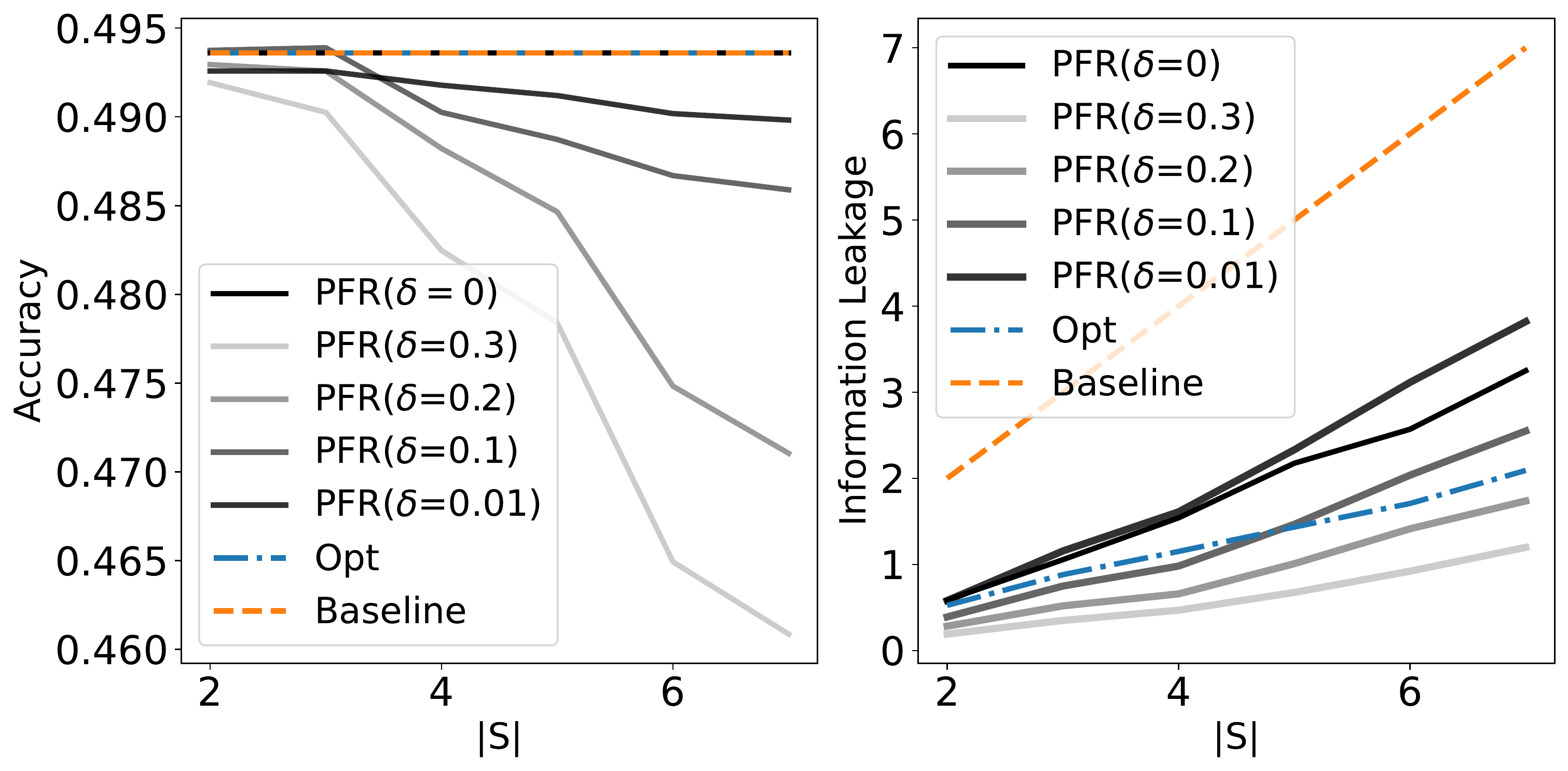}
\caption{Customer dataset}
\end{subfigure}
\begin{subfigure}[b]{0.43\textwidth}
\includegraphics[width = 1.0\linewidth]{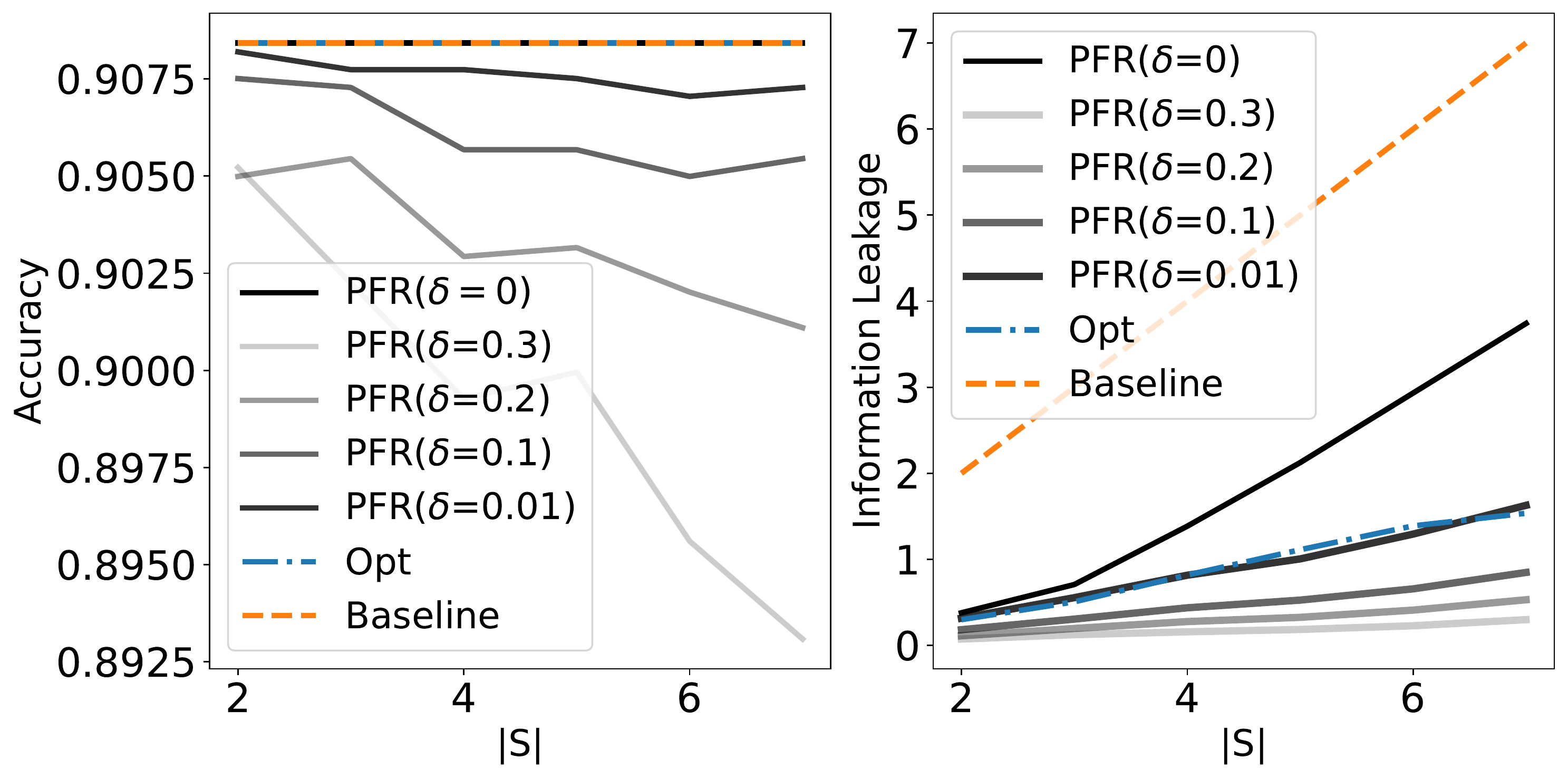}
\caption{Children fetal health dataset}
\end{subfigure}
\caption{Accuracy and information leakage for different choices of number of sensitive features $|S|$ under multinomial Logistic Regression}
\label{fig:multi_class_linear}
\end{figure*}

\paragraph{Nonlinear classifiers}
Similarly, we present a comparison of our proposed algorithms with the baseline methods when using non-linear classifiers. These metrics are reported for the Customer and Children Fetal Health datasets in Figure \ref{fig:multi_class_nonlinear}. The results show that using PFR with a value of $\delta=0$ results in a minimal decrease in accuracy, but significantly reduces the amount of information leakage compared to the Baseline method.

\begin{figure*}[tb]
\captionsetup{justification=centering}
\centering
\begin{subfigure}[b]{0.43\textwidth}
\includegraphics[width = 1.0\linewidth]{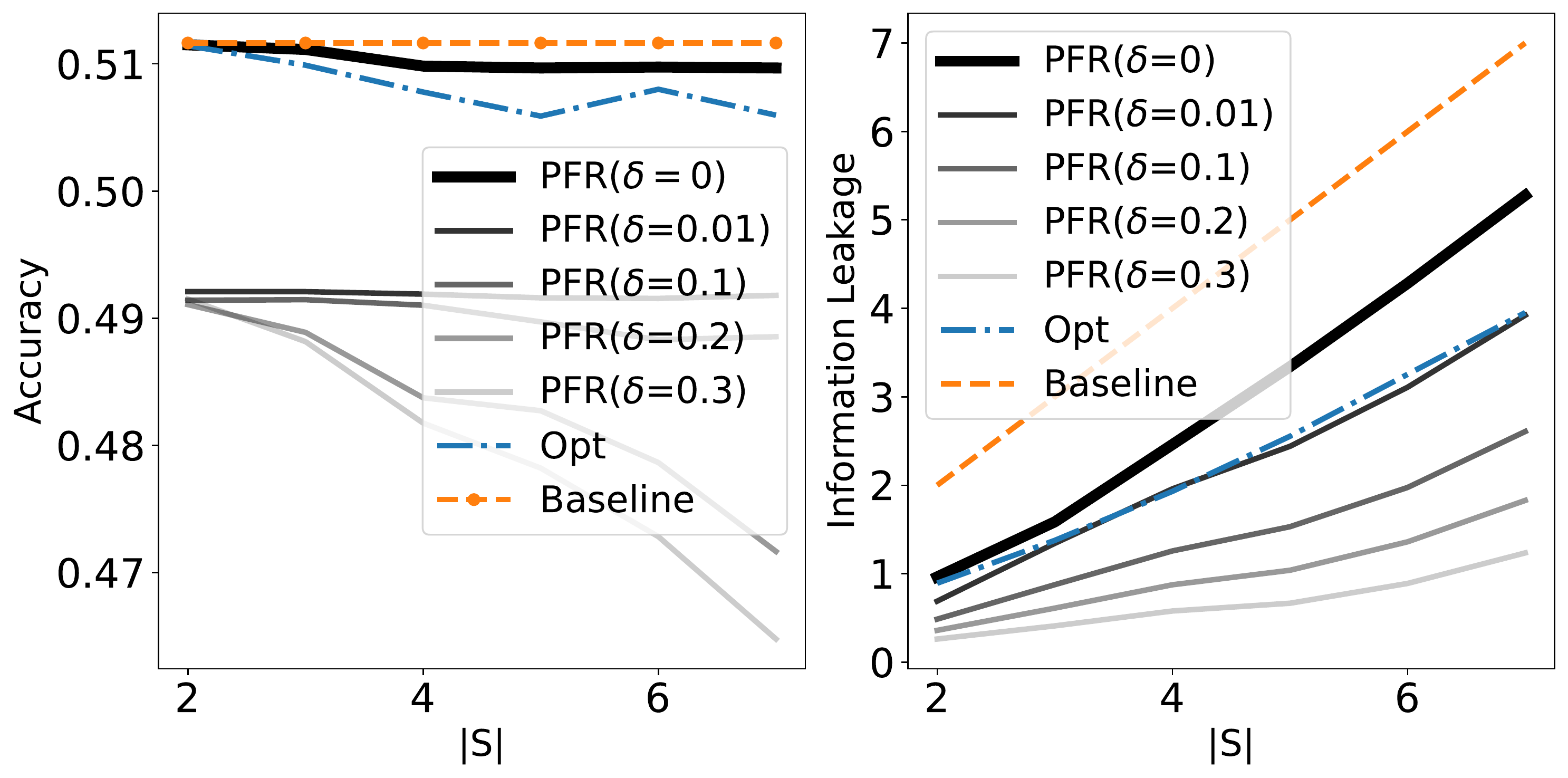}
\caption{Customer segmentation dataset}
\end{subfigure}
\begin{subfigure}[b]{0.43\textwidth}
\includegraphics[width = 1.0\linewidth]{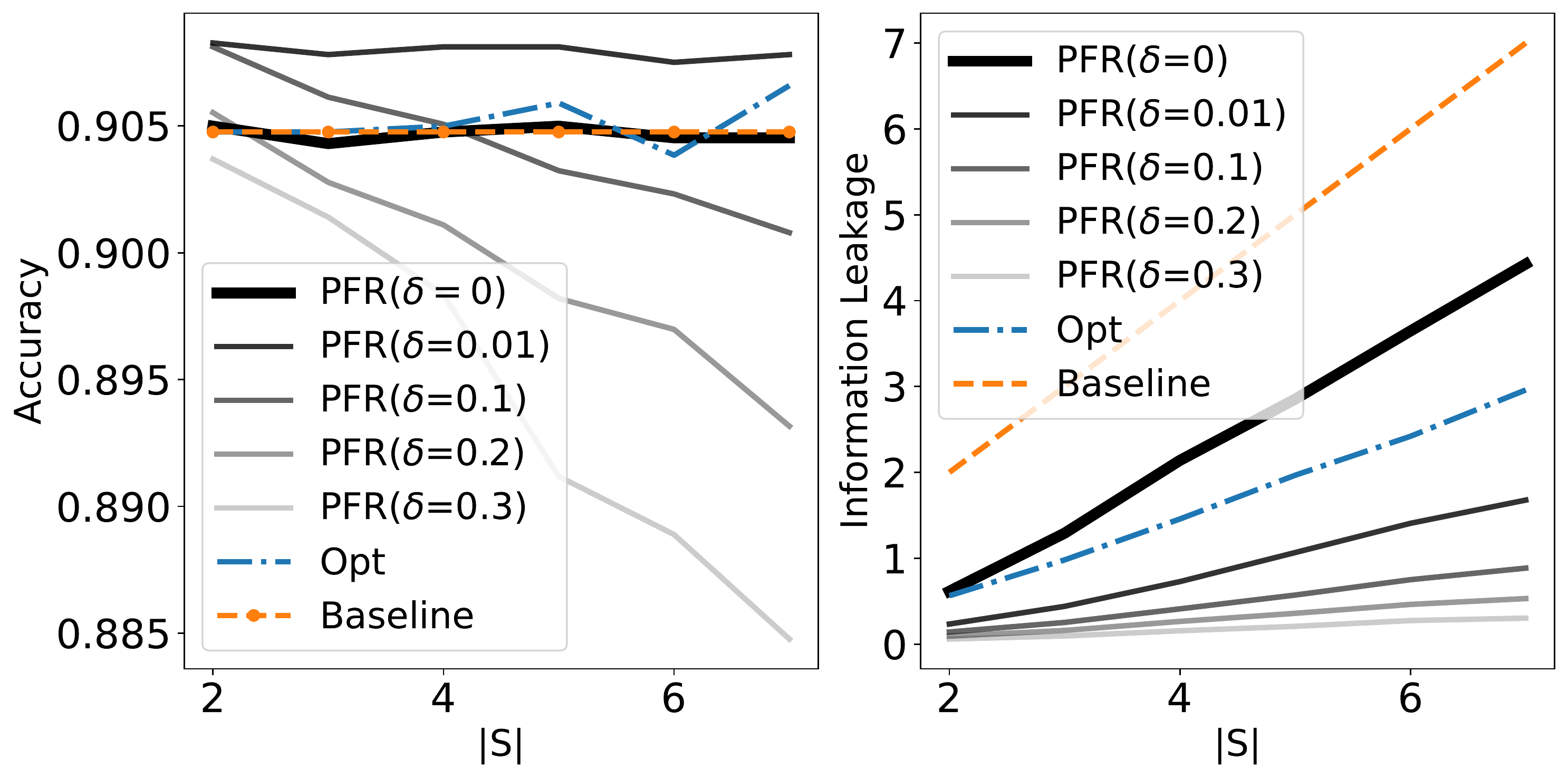}
\caption{Children fetal health dataset}
\end{subfigure}
\caption{Accuracy and information leakage for different choices of number of sensitive features $|S|$ under non-linear classifiers}
\label{fig:multi_class_nonlinear}
\end{figure*}

\end{document}